\def\eqref#1{equation~\ref{#1}}
\def\1{\bm{1}}
\DeclareMathAlphabet{\mathsfit}{\encodingdefault}{\sfdefault}{m}{sl}
\SetMathAlphabet{\mathsfit}{bold}{\encodingdefault}{\sfdefault}{bx}{n}
\newcommand{\E}{\mathbb{E}}
\newcommand{\R}{\mathbb{R}}
\DeclareMathOperator*{\argmin}{arg\,min}
\definecolor{my-full-blue}{HTML}{1F77B4}
\definecolor{my-full-orange}{HTML}{FF7F0E}
\definecolor{my-extra-orange}{HTML}{7C4514}
\definecolor{my-full-green}{HTML}{2CA02C}
\definecolor{my-full-red}{HTML}{d62728}
\definecolor{my-full-purple}{HTML}{9467bd}
\colorlet{my-blue}{my-full-blue!30}
\colorlet{my-orange}{my-full-orange!30}
\colorlet{my-green}{my-full-green!30}
\colorlet{my-red}{my-full-red!30}
\colorlet{my-purple}{my-full-purple!30}
\definecolor{ckeyword}{HTML}{7F0055}
\definecolor{ccomment}{HTML}{3F7F5F}
\definecolor{cstring}{HTML}{2A0099}
\lstdefinestyle{numbers}{
	numbers=left,
	framexleftmargin=20pt,
	numberstyle=\tiny,
	firstnumber=auto,
	numbersep=1em,
	xleftmargin=2em
}
\lstdefinestyle{layout}{
	frame=none,
	captionpos=b,
}
\lstdefinestyle{comment-style}{
	morecomment=[l]//,
	morecomment=[s]{/*}{*/},
	commentstyle={\color{ccomment}\itshape},
}
\lstdefinestyle{string-style}{
	morestring=[b]",%
	morestring=[b]',%
	stringstyle={\color{cstring}},
	showstringspaces=false,%
}
\lstdefinestyle{keyword-style}{
	keywordstyle={\ttfamily\bfseries},
	morekeywords={
		function,
		constructor,
		int,
		bool,
		return,
		returns,
		uint
	},
	morekeywords = [2]{},
	keywordstyle = [2]{\text},
	sensitive=true,
}
\lstdefinestyle{input-encoding}{
	inputencoding=utf8,
	extendedchars=true,
	literate=
	{ℝ}{$\reals$}1%
	{→}{$\rightarrow$}1%
	{α}{$\alpha$}1%
	{β}{$\beta$}1%
	{λ}{$\lambda$}1%
	{θ}{$\theta$}1%
	{ϕ}{$\phi$}1%
}
\lstdefinestyle{escaping}{
	moredelim={**[is][\color{blue}]{\%}{\%}},
	escapechar=|,
	mathescape=true
}
\lstdefinestyle{default-style}{
	basicstyle=\fontencoding{T1}\ttfamily\footnotesize,
	style=numbers,
	style=layout,
	style=comment-style,
	style=string-style,
	style=keyword-style,
	style=input-encoding,
	style=escaping,
	tabsize=2,
	upquote=true
}
\lstdefinelanguage{BASIC}{
	language=C++,
	style=default-style
}[keywords,comments,strings]%
\definecolor{blue}{HTML}{347bc6}
\definecolor{green-underline}{HTML}{2de12c}
\definecolor{yellow-underline}{HTML}{ffd700}
\tikzstyle{block} = [
\tikzstyle{arrow} = [
\tikzstyle{dashedline} = [
\tikzstyle{line} = [
\newtheorem{theorem}{Theorem}
\newtheorem{lemma}{Lemma}
\newcommand{\DKL}{D_{\mathrm{KL}}}
\newcommand{\DKLf}[1]{%
D_{\mathrm{KL}}^{[#1]}
}
\algnewcommand\algorithmicassert{\texttt{assert}}
\algnewcommand\Assert[1]{\State \algorithmicassert(#1)}%
\DeclareMathOperator{\supersede}{supersede}
\DeclareMathOperator{\topk}{TopK}
\newcolumntype{x}[2]{S[table-format=#1.#2,table-auto-round]}
\colorlet{c_bg}{black!10}
\colorlet{c_ds}{my-full-green!30}
\colorlet{c_attribute}{my-full-blue!30}
\colorlet{c_arrow}{black!15}
\newcommand{\fudge}{\textsc{Fudge}\xspace}
\newcommand{\preadd}{\textsc{PreAdd}\xspace}
\newcommand{\selfdebias}{\textsc{SelfDebias}\xspace}
\newcommand{\pol}{\texttt{/pol/}\xspace}
\DeclareMathOperator{\unionop}{union}
\newcommand{\union}{\ensuremath{\unionop}\xspace}
\DeclareMathOperator{\intersectionop}{intersection}
\newcommand{\intersection}{\ensuremath{\intersectionop}\xspace}
\newcommand{\V}[1]{\bm{{#1}}}
\def\vQi{{\V{Q_i}}}
\def\vfip{{\V{f_i'}}}
\def\vC{{\V{C}}}
\newcommand{\vQ}[1]{\V{Q_{#1}}}
\newcommand{\crefrangeconjunction}{--}
\crefname{listing}{Lst.}{listings}
\crefname{line}{Lin.}{Lin.}
\crefname{appendix}{App.}{App.}
\newcommand{\app}[1]{%
	\ifbool{includeappendix}{\cref{#1}}{the appendix}%
}
\newcommand{\App}[1]{%
	\ifbool{includeappendix}{\cref{#1}}{The appendix}%
}
\title{Controlled Text Generation via Language Model Arithmetic}
\author{Jasper Dekoninck, Marc Fischer, Luca Beurer-Kellner, Martin Vechev\\
Department of Computer Science\\
ETH Zurich, Switzerland\\
\texttt{\{jasper.dekoninck,marc.fischer,luca.beurer-kellner,martin.vechev\}@inf.ethz.ch}
}
\begin{document}
\sisetup{
text-series-to-math = true ,
propagate-math-font = true
}

\maketitle

\begin{abstract}
    As Large Language Models (LLMs) are deployed more widely, customization with respect to vocabulary, style, and character becomes more important. In this work, we introduce model arithmetic, a novel inference framework for composing and biasing LLMs without the need for model (re)training or highly specific datasets. In addition, the framework allows for more precise control of generated text than direct prompting and prior controlled text generation (CTG) techniques. Using model arithmetic, we can express prior CTG techniques as simple formulas and naturally extend them to new and more effective formulations. Further, we show that speculative sampling, a technique for efficient LLM sampling, extends to our setting. This enables highly efficient text generation with multiple composed models with only marginal overhead over a single model. Our empirical evaluation demonstrates that model arithmetic allows fine-grained control of generated text while outperforming state-of-the-art on the task of toxicity reduction. 
    We release an open source easy-to-use implementation of our framework at \url{https://github.com/eth-sri/language-model-arithmetic}.
\end{abstract}
\section{Introduction} \label{sec:intro}

In recent years, Large Language Models (LLMs) \citep{gpt3,palm,llama-2} have been increasingly recognized for their capabilities in handling a wide range of tasks \citep{codellama,instructgpt}.
In many applications, such as chatbots interacting with diverse audiences like children, students, or customers, precise control and customization of attributes such as the employed vocabulary, linguistic style, and emotional expression are crucial.

\paragraph{Controlling Language Models} 
A common technique for this is prompting with natural language \citep{instructgpt}. While prompting is simple and makes it easy to condition the LLM to a broad attribute, the ambiguity of natural language makes it challenging to express how present that attribute should be in the generated text.
Further, prompting also lacks the ability to effectively steer the model away from a certain attribute in a reliable manner, as mentioning a specific topic in the prompt can inadvertently increase the likelihood of the model generating text about it \citep{negative-prompts}, e.g. "do not mention cats" may increase the likelihood of the model referring to cats. One alternative is fine-tuning the model, but this requires highly specific training data for the desired attribute, which also has to implicitly encode the strength of the conditioning.
Controlled Text Generation (CTG) techniques aim to solve this problem by steering the model during inference instead \citep{dexperts,pplm,fudge}: The model is conditioned on a particular attribute $a$ in a smoothly controllable way, by biasing the model's token distribution. Many CTG methods are inspired by Bayes rule $P(\text{text}|a) \propto P(a|\text{text}) P(\text{text})$, and utilize an auxiliary model, i.e. $P(a|\text{text})$, to condition the LLM, i.e., $P(\text{text})$, towards $a$.

\paragraph{Key Challenge: Lack of Expressive and Efficient Control for Text Generation} These techniques, however, suffer from several drawbacks, including a lack of expressiveness, efficiency, and interpretability.
First, to control the strength of the applied conditioning, a parameter $\lambda$ is introduced in an ad-hoc manner, i.e., as an exponential weight $P(a|\text{text})^\lambda$.
However, introducing the strength in this way, while possible, quickly becomes unintuitive as it can no longer be interpreted in a Bayesian manner, e.g., when biasing away from attributes.
Moreover, neither prompting nor CTG methods allow for the natural and controlled combination of multiple attributes or instructions with relative strength. This is due to the inherent ambiguity of natural language in prompting \citep{askmeanything,calibratebeforeuse}, and the absence of a theoretical foundation and intuitive semantics for the biasing strength $\lambda$ with CTG methods.
Lastly, both CTG techniques and fine-tuning often require custom and highly specific training data for the desired attribute \citep{fudge,gedi,countergedi,critic-control} and can be resource-intensive \citep{langevin,mucoco} as multiple models are evaluated at inference time.

\begin{figure}[t]
    \colorlet{color-child}{my-full-blue}
    \colorlet{color-adult}{my-full-green}
    \colorlet{color-magic}{my-extra-orange}
    \colorlet{color-formal}{my-full-purple}
    
    \begin{minipage}{0.52\textwidth}
        \centering
        \footnotesize
        \input{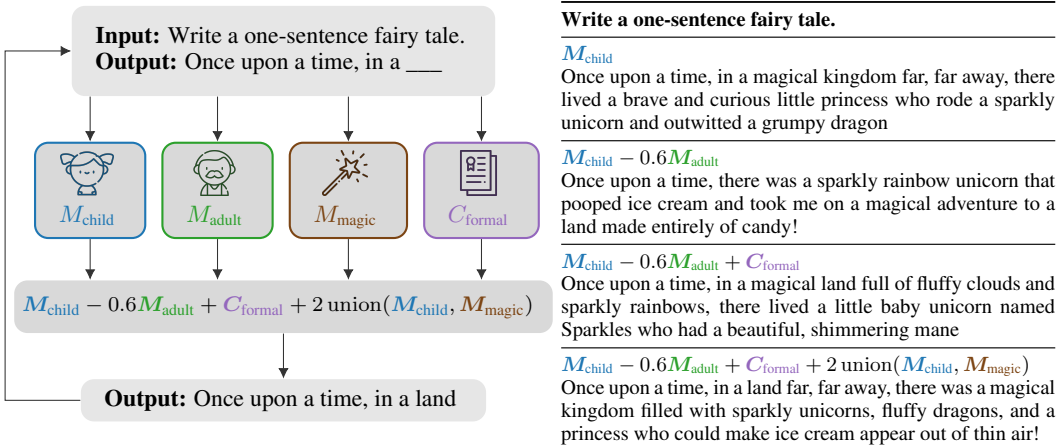}
    \end{minipage}
    \hfill
    \begin{minipage}{0.48\textwidth}
        \centering
        \footnotesize
        \resizebox{\textwidth}{!}{
            \begin{tabularx}{1.13\textwidth}{@{}X@{}}
                \toprule
                \textbf{Write a one-sentence fairy tale.}
                \\ 
                \midrule
                $\V{\color{color-child}M_\text{child}}$ \\
                Once upon a time, in a magical kingdom far, far away, there lived a brave and curious little princess who rode a sparkly unicorn and outwitted a grumpy dragon \\
                \midrule
                $\V{\color{color-child}M_\text{child}} - 0.6 \V{\color{color-adult}M_\text{adult}}$ \\
                Once upon a time, there was a sparkly rainbow unicorn that pooped ice cream and took me on a magical adventure to a land made entirely of candy! \\ \midrule
                $\V{\color{color-child}M_\text{child}} - 0.6 \V{\color{color-adult}M_\text{adult}} + \V{\color{color-formal}C_\text{formal}}$ \\ Once upon a time, in a magical land full of fluffy clouds and sparkly rainbows, there lived a little baby unicorn named Sparkles who had a beautiful, shimmering mane\\ \midrule
                $\V{\color{color-child}M_\text{child}} - 0.6 \V{\color{color-adult}M_\text{adult}} + \V{\color{color-formal}C_\text{formal}} + 2 \union(\V{\color{color-child}M_\text{child}},\V{\color{color-magic}M_\text{magic}})$  \\ Once upon a time, in a land far, far away, there was a magical kingdom filled with sparkly unicorns, fluffy dragons, and a princess who could make ice cream appear out of thin air!  \\
                \bottomrule
            \end{tabularx} 
        }
    \end{minipage}
    \caption[]{Overview of model arithmetic using an illustrative example. We outline the procedure for generating a fairy tale (left) using the models $\V{M_\text{child}}$, $\V{M_\text{adult}}$, $\V{M_\text{magic}}$ that produce text conditioned on the attributes \textit{child}, \textit{adult}, and \textit{magic}, respectively and $\V{C_\text{formal}}$ a classifier for the formality of text. The right table shows example outputs for different (partial) formulas. Image attribution in \cref{appendix:attribution}.}
    \label{fig:overview}
    \vspace{-2mm}
\end{figure}

\paragraph{Fine-Grained Control via Model Arithmetic} 
In this work, we address these challenges and introduce \textit{model arithmetic}, a principled and intuitive method to combine multiple models. Our method is orthogonal to prompting, fine-tuning, and simple CTG concepts, like the use of classifiers, and can naturally incorporate them. Model arithmetic enables us to blend multiple LLMs and attributes into a single precisely controlled, formula-based composite model.
To illustrate our method, consider the simple example in \cref{fig:overview}, where we aim to write a magical, child-like fairy tale. We employ multiple models $M_a$, with different attributes $a$. On the top right, we see a prompted model $\V{M_\text{child}}$ that already generates a child-appropriate story. However, the resulting text is not child-like and we therefore subtract an adult-conditioned model, $\V{M_\text{adult}}$, with a weight of $0.6$ to generate a less adult-sounding story. Now, to again increase formality, we additionally bias with classifier $\V{C_\text{formal}}$. Lastly, we use a special $\union$ operator to obtain a model that emphasizes both magical and child-like language and use it to further bias generation and obtain our final result.
This simple example cannot be precisely expressed with prior CTG approaches and showcases the flexibility of model arithmetic. That is, it allows us to compose models in a natural way, while precisely controlling the impact of each component.
Further, we can naturally incorporate paradigms such as prompting or fine-tuning (for the individual $M$ and $C$) and even implement many prior CTG techniques (discussed in \cref{sec:prompt-arithmetic}) as simple formulas.

\paragraph{Efficient Model Arithmetic via Generalized Speculative Sampling} 
CTG methods, including model arithmetic, can lead to increased inference times as multiple models need to be evaluated in order to generate text.
To counteract this, we generalize speculative sampling \citep{speculative} to model arithmetic.
Speculative sampling is usually employed to reduce the latency of a single LLM by augmenting it with a smaller model that proposes tokens, which are then validated by the LLM. In contrast, we extend it in a way where we postpone the evaluation of more expensive model calls within model arithmetic formulas.
This allows us to execute model formulas comprised of multiple models with only marginal overhead over a single model and reduces model calls by up to $64\%$. 
The resulting inference speedup naturally extends to prior CTG techniques that can be expressed in model arithmetic  \citep{preadd,cfg,cognac,self-debias}.

\newpage
\paragraph{Key Contributions} Our core contributions include:
\begin{itemize}
    \item Model Arithmetic: A principled framework for fine-grained CTG, enabling precise control over multiple attributes. Our framework can express many prior CTG approaches (\cref{sec:prompt-arithmetic}).
    \item  An extension of speculative sampling to model arithmetic, counteracting the overhead of CTG and enabling efficient inference, which naturally benefits CTG techniques expressible in model arithmetic (\cref{sec:speed}).
    \item An extensive qualitative and quantitative evaluation of model arithmetic (\cref{sec:evaluation}). We show that it is more expressive than prior CTG work and outperforms them in toxicity reduction. We demonstrate that our extended speculative sampling reduces model calls by up to $64\%$.
\end{itemize}

\section{Background}\label{sec:background}

We briefly introduce the required background and notation used in the remainder of the paper.

\paragraph{Discrete Probability Distributions} A discrete probability distribution $P$ associates a probability $P(x)$ with every element $x$ in a finite set $T$. For language modeling, this finite set is usually a set of tokens (or subwords). We often want to compute the probability of a token $x_k$ given all previous tokens $x_1, ..., x_{k-1}$ in a sequence, which we denote as $P(x_k | x_{1:k-1})$.
We use the Kullback-Leibler (KL) divergence to measure the similarity of two distributions $P$ and $Q$:%
\begin{equation*}
    \DKL(P || Q | x_{1:k-1}) = \sum_{x \in T} P(x|x_{1:k-1}) \log \frac{P(x|x_{1:k-1})}{Q(x|x_{1:k-1})},
\end{equation*}
where we append $| x_{1:k-1}$ to denote conditioning on a sequence of tokens $x_{1:k-1}$. If this is implied by the context, we will omit the conditioning on $x_{1:k-1}$ and simply write $\DKL(P || Q)$.

\paragraph{Autoregressive Large Language Models} Large Language Models (LLMs) are trained to generate sequences of tokens. Most recently, successful LLMs are autoregressive, i.e., they generate output token-by-token by modeling the probability distribution $P(x_k | x_{1:k-1})$ and sampling one token at a time from that distribution. Whenever we refer to a language model $M$, we directly refer to this distribution and denote it as $M(x_k | x_{1:k-1})$.

\paragraph{Controlled Text Generation} 
As introduced in \cref{sec:intro}, CTG techniques aim to introduce a given attribute $a$ (e.g. style or topic) in the output of a language model $M$, by biasing its distribution with respect to $a$. Oftentimes, a strength parameter $\lambda$ controls the strength of this conditioning. 
The conditioning model $P(a|\text{text})$ is modeled with a classifier \citep{fudge, fudge-2, critic-control,gedi,countergedi}, a smaller finetuned model \citep{dexperts}, or with the same model $M$ using a different prompt \citep{preadd,self-debias,cfg,cognac}. In the first two cases, the biasing models have to be trained ahead of time. %
 Many of these approaches are based on (a variant of) Bayes rule \citep{cfg,dexperts,preadd,marco,fudge}.

\paragraph{Speculative Sampling} Speculative sampling \citep{speculative} speeds up inference of autoregressive language models by using a small proposal model $m$ to generate several tokens $x_1, ..., x_k$ and then validates these tokens using a bigger, more capable model $M$. Due to the way the underlying transformer architecture of current LLMs \citep{vaswani2017attention} works, this validation call is significantly cheaper than generating the tokens with $M$ directly.

Specifically, the entire sequence of proposed tokens $x_1, ..., x_k$ can be validated by a single, retroactive call to $M$. If token $x_i$ is rejected by $M$, all subsequent tokens $x_{i+1}, ..., x_k$ are discarded and $x_i$ is resampled. If all tokens are accepted, the next token $x_{k+1}$ can be directly sampled using the result of the same validation call to $M$. Thus, one can generate up to $k + 1$ tokens with just a single call to $M$. Importantly, this procedure of accepting and resampling tokens ensures that the resulting distribution is equivalent to drawing token samples directly from $M$. For reference, we include the full speculative sampling procedure in \cref{alg:speculative} of \cref{appendix:speculative}.

\section{Model Arithmetic}\label{sec:prompt-arithmetic}

In this section we introduce model arithmetic, a principled approach for advanced CTG that enables the precise composition of language models, resulting in a distribution $P$ that can be sampled like a language model.
This addresses the previously discussed drawbacks of prior CTG methods.%

To this end, we first outline how an \emph{output} distribution $P$ is constructed from a set of \emph{input} distributions $Q_1, \ldots, Q_n$, by minimizing a linear combination of (weighted) KL-divergences $\DKL(P||Q_i)$. Then we show how model arithmetic can be used to describe these distributions in a natural way.
We defer all proofs to \cref{appendix:proofs}.

\paragraph{(Weighted) KL-Optimality}
The standard KL-divergence $\DKL(P||Q)$ attends to each token by an equal amount, which might not always be desirable in the CTG setting. Indeed, suppose $Q$ represents the distribution of a certain attribute $a$ that we want to introduce in the output distribution $P$. When certain tokens are generally more associated with the attribute $a$, we might give the term $\DKL(P||Q)$ more weight for these tokens, allowing to more strongly bias these specific tokens while reducing the bias for less important tokens. We therefore introduce the \textit{weighted KL-divergence} $\DKLf{f}$ as
\begin{equation*}
    \DKLf{f}(P || Q | x_{1:k-1}) = \sum_{x} P(x|x_{1:k-1}) f(x, x_{1:k-1}) \log \frac{P(x|x_{1:k-1})}{Q(x|x_{1:k-1})}
\end{equation*}
where $f \colon T \times T^{k-1} \to \mathbb{R}$ assigns a weight to each token $x \in T$, conditioned on $x_{1:k-1}$. We will later show how high-level constructs in model arithmetic map to particular choices of $f$.

\cref{theorem:optimization} now defines and solves the problem of combining arbitrary probability distributions into a single output distribution by framing it as a minimization problem over a linear combination of weighted KL-divergences.

\begin{theorem}[Weighted KL-Optimality]\label{theorem:optimization}
    Let $T$ be the set of all tokens and $x_1, ..., x_{k-1}$ be a sequence of tokens such that $x_i \in T$. 
    Then, given distributions $Q_1, \ldots, Q_n$ over $T$, functions $f_1, \ldots, f_n \colon T \times T^{k-1} \to \R$, and under mild technical assumptions detailed in \cref{appendix:solution_minimization}, the solution to the optimization problem for the generation of token $x_k$
    \begin{equation}\label{eq:optimization}
        \argmin_{P} \sum_{i=1}^n \DKLf{f_i}(P || Q_i | x_{1:k-1})
    \end{equation}
    is given by
    \begin{equation}\label{eq:solution}
        P(x_k=x|x_{1:k-1}) = \sigma\left(\frac{1}{\sum_{i=1}^nf_i(x, x_{1:k-1})} \sum_{i=1}^n f_i(x, x_{1:k-1}) \log Q_i(x|x_{1:k-1})\right)
    \end{equation}
    where $\sigma$ is the softmax function.
\end{theorem}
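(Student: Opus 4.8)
The plan is to treat \eqref{eq:optimization} as a strictly convex minimization over the probability simplex $\Delta_T = \{\,P : P(x)\ge 0,\ \sum_{x\in T}P(x)=1\,\}$ and to solve it with Lagrange multipliers. Fix the context $x_{1:k-1}$ throughout and abbreviate $p_x := P(x_k=x\mid x_{1:k-1})$, $q_{i,x} := Q_i(x\mid x_{1:k-1})$, $f_{i,x} := f_i(x,x_{1:k-1})$, and set $F_x := \sum_{i=1}^n f_{i,x}$ and $G_x := \sum_{i=1}^n f_{i,x}\log q_{i,x}$. A direct rearrangement of the weighted KL-divergences turns the objective in \eqref{eq:optimization} into
\[
    \mathcal{L}(p) \;=\; \sum_{x\in T}\Bigl(F_x\, p_x\log p_x \;-\; G_x\, p_x\Bigr).
\]
Under the technical assumptions I expect to be required (in particular $q_{i,x}>0$, so the logarithms are finite, and $F_x>0$ for every $x$), the map $t\mapsto F_x\,t\log t$ is strictly convex on $[0,\infty)$ with the convention $0\log 0=0$; hence $\mathcal{L}$ is strictly convex on the convex compact set $\Delta_T$, so it admits a unique minimizer and any feasible point satisfying the first-order optimality conditions is that minimizer. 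Note that this already accommodates negative weights $f_{i,x}$ (e.g.\ subtraction of a model), as only the aggregate $F_x$ needs to be positive.

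Next I would argue that the minimizer lies in the relative interior of $\Delta_T$, i.e.\ $p_x>0$ for all $x$: since $\frac{\partial}{\partial p_x}\bigl(F_x p_x\log p_x\bigr)=F_x(\log p_x+1)\to-\infty$ as $p_x\downarrow 0$, shifting an infinitesimal amount of mass onto any token with $p_x=0$ strictly decreases $\mathcal{L}$, so no boundary point can be optimal. The non-negativity constraints are therefore inactive, and it suffices to enforce only $\sum_x p_x=1$ through a single multiplier $\nu$. Stationarity of the Lagrangian then reads, for every $x\in T$,
\[
    F_x\bigl(\log p_x+1\bigr)-G_x=\nu
    \qquad\Longrightarrow\qquad
    \log p_x=\frac{G_x+\nu}{F_x}-1 .
\]
This is the step that carries the argument: invoking the assumption that $F_x$ does not depend on $x$ for the given context (write $F_x\equiv F$; this is precisely the per-token normalization already appearing in \eqref{eq:solution}, and it holds by construction for the model-arithmetic operators), the multiplier contributes only the token-independent constant $\nu/F-1$, hence $p_x\propto\exp(G_x/F)$. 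Normalizing over $T$ is exactly the application of the softmax, giving $p_x=\sigma\!\left(\tfrac{1}{\sum_{i}f_{i,x}}\sum_{i}f_{i,x}\log q_{i,x}\right)_{\!x}$, which is \eqref{eq:solution}. Since this candidate is manifestly a valid element of $\Delta_T$ (strictly positive, summing to one), strict convexity certifies it as the unique global minimizer.

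The main obstacle is not the calculus but the regularity bookkeeping: one must (i) pin down the exact ``mild technical assumptions'' — finiteness of the $\log Q_i$ terms, strict positivity of $F_x=\sum_i f_{i,x}$, and the per-context constancy of $\sum_i f_i$ over tokens — under which the closed form is literally correct; and (ii) justify interiority so that the KKT system collapses to a plain Lagrange condition with a single multiplier. As a sanity check I would observe that when every $f_i$ is constant in $x$ the result specializes to the familiar product-of-experts / geometric-mixture solution $P(x)\propto\prod_{i}Q_i(x)^{\,f_i/\sum_j f_j}$, and in particular recovers the $n=1$ identity $P=Q_1$.
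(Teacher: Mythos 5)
Your proof is correct, but it takes a genuinely different route from the paper's. The paper proceeds by algebraic regrouping: it pulls the weights into exponents, uses the assumption that $f_S(x_{1:k-1})=\sum_i f_i(x,x_{1:k-1})$ is constant in $x$ to factor $f_S$ out of the sum, and recognizes the remaining expression as $f_S\cdot\DKL\bigl(P \,\big\|\, \textstyle\prod_i Q_i^{f_i/f_S}\bigr)$ up to a normalizing constant, so that the variational characterization of KL (minimized iff $P=Q$) immediately yields the geometric-mixture/softmax form. You instead rewrite the objective as $\sum_x (F_x p_x\log p_x - G_x p_x)$ and run an explicit KKT analysis on the simplex: strict convexity from $F_x>0$, interiority from the $-\infty$ boundary derivative of $t\log t$, and a single Lagrange multiplier that becomes a token-independent shift precisely because of the constancy assumption. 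The two arguments rest on the same convexity fact (Gibbs' inequality is the special case of your stationarity computation), but yours buys more: it makes existence, uniqueness, and strict positivity of the minimizer explicit, and it isolates exactly where the constancy of $\sum_i f_i$ is needed (without it the multiplier term $\nu/F_x$ would be token-dependent and the closed form would fail), whereas the paper's ``complete-the-KL'' step is shorter but leaves the normalization of $\prod_i Q_i^{f_i/f_S}$ and the uniqueness claim implicit. Your identification of the required assumptions — positivity of the $Q_i$ where the weights are nonzero, $\sum_i f_i>0$, and per-context constancy of $\sum_i f_i$ over tokens — matches the paper's stated assumptions in its appendix.
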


We note that this result applies more broadly than the autoregressive setting. Instead of conditioning on $x_{1:k-1}$, one can condition on $x_{1:k-1}, x_{k+1:t}$ without otherwise modifying the theorem. 

Further, we can write $f_i(x, x_{1:k-1}) = \lambda_i(x_{1:k-1}) f'_i(x, x_{1:k-1})$ where we factor $f_i$ into a part $\lambda_i$ that only depends on the context (i.e., the previous tokens) for scaling, and $f'_i(x, x_{1:k-1})$ that encodes token specific weights.

\paragraph{Model Arithmetic} Distribution $P$, resulting from \cref{eq:solution}, is completely determined by $\lambda_i(x_{1:k-1})$, $f'_i(x, x_{1:k-1})$ and $\log Q_i(x | x_{1:k-1})$ for all $x \in T$ and $i \in \{1, \dots, n\}$. Since $T$ is a finite set, we can write $f'_i(x, x_{1:k-1})$ and $\log Q_i(x | x_{1:k-1})$ as vectors $\vfip := (f'_i(x, x_{1:k-1}))_{x \in T}$  and $\vQi := (\log Q_i(x|x_{1:k-1}))_{x \in T}$. Finally, since the normalization is completely determined by $\lambda_i$ and $\vfip$, we can drop this in our notation and write $F = \sum_{i=1}^n \lambda_i  \vfip \vQi$, where vector multiplication is element-wise. We drop $\lambda_i$ and $\V{f'_i}$ when they are $1$.

This notation makes it possible to use simple arithmetic operations to combine different input prompts, attributes, language models, and classifiers. We thus call this notation \textit{model arithmetic}. 
Next, we discuss the operators in model arithmetic along with motivating examples (further shown in \cref{appendix:output_examples}) and summarize this in  \cref{table:arithmatic}.

\begin{figure}[t]
    \begin{minipage}{0.55\linewidth}
        \centering
        \footnotesize
        \captionof{table}{Overview of \emph{Model Arithmetic} where $\mathcal{I}_1(x) := [Q_1(x) > Q_2(x)]$ and $\mathcal{I}_2(x) := 1 - \mathcal{I}_1(x)$, $\V{C}$ is a classifier and $U$ the uniform distribution.}\label{table:arithmatic}
        \vspace{-1mm}
        \resizebox{\linewidth}{!}{
            \begin{tabular}[]{@{}lll@{}}
            \toprule
            & \textbf{Model Arithmetic} & \textbf{Optimization Problem}\\
            \midrule
            
             \makecell[l]{Linear\\Combination} & $\sum_i \lambda_i \vQi $ & $\sum_i \lambda_i \DKL(P || Q_i)$\\[2mm]
            
            Classifier & $\lambda \vC$ & $\lambda\Big(\DKL(P || Q_C) - \DKL(P || U )\Big) $\\[2mm]
            Union & $\union(\vQ{1}, \vQ{2})$ & $\DKLf{\mathcal{I}_1}(P || Q_1) + \DKLf{\mathcal{I}_2}(P || Q_2)$ \\
            \bottomrule
            \end{tabular}
        }
    \end{minipage}
    \hfill
    \begin{minipage}{0.42\linewidth}
    \centering
    \footnotesize
    \captionof{table}{Prompt arithmetic examples using Llama-2-Chat-13b.} \label{table:example_arithmetic}
    \vspace{-2mm}
    \resizebox{\linewidth}{!}{
        \begin{tabularx}{1.25\linewidth}{@{}>{\arraybackslash}X@{}}
        \toprule
        \textbf{Tell me something interesting about pandas.} \\
        \midrule
         $\V{M_\text{formal}}$\\
         Certainly! Pandas are fascinating creatures, known for their distinct black and white markings \dots \\ 
         \midrule
        $2\V{M_\text{formal}} - \V{M}$\\
        Certainly, user. The giant panda, scientifically known as Ailuropoda melanoleuca, is a intriguing and unique species of bear \dots \\
        \bottomrule
        \end{tabularx} 
    }

    \end{minipage}
\end{figure}

\paragraph{Linear Combinations} Many useful properties can be expressed as a linear combination of probability distributions $\sum_{i=1}^n \lambda_i \V{Q_i}$, with $\lambda_i \in \R$. Most commonly, linear formulas include the standard output of an LLM $M$ as $Q_1$ (with $\lambda_1 =1$) and additional distributions $Q_i$ are then used to bias the overall output towards (if $\lambda_i > 0$) or away from (if $\lambda_i < 0$) a certain attribute.  

This can be used to combine several characteristics into a single persona, for model ensembling, and can also express prior CTG approaches \citep{dexperts,preadd,cfg,cognac} as shown in \cref{appendix:related}. \cref{table:example_arithmetic} show the results of linearly composing a non-conditioned model and a prompted \emph{formal} model using a negative coefficient. As shown, the resulting composite model generates much more formal output than  with standard prompting $\V{M_\text{formal}}$.

\begin{wraptable}[11]{r}{0.4\linewidth}
    \vspace{-5mm}
    \centering
    \footnotesize
    \caption{Example using the GPT2-XL model and a detector $\V{C_{\text{gpt2-detector}}}$ for it.} \label{table:example_classifier}
    \vspace{-3mm}
    \resizebox{\linewidth}{!}{
        \begin{tabularx}{1.212\linewidth}{@{}>{\arraybackslash}X@{}}
        \toprule
        \textbf{I like to}\\
        \midrule
        $\V{M_\text{gpt2}}$\\
        think of myself as a pretty good cook. I've made a lot of food, and I've learned a lot about cooking. I've also learned a lot about the world of food, and the people who eat it.\\
        \midrule
        $\V{M_\text{gpt2}} - 4 \V{C_\text{gpt2-detector}}$\\
        believe that I'm a pretty good judge of character. I watch a lot of TV - I'm a big fan of The Walking Dead, Game of Thrones and The Big Bang Theory \dots \\
        \bottomrule
        \end{tabularx} 
    }
\end{wraptable}
\paragraph{Classifiers} Binary classifiers that associate a probability with an input text can also be used to guide the output distribution towards the classified attribute (cf. \citet{fudge,fudge-2}). These classifiers can express attributes that are not easily expressible in natural language, such as the reward model in RLHF \citep{instructgpt} or detection of AI-generated text \citep{openai-detector} as shown in \cref{table:example_classifier}. There, we generate text that resembles human content more closely by using a classifier that detects AI-generated text \citep{openai-detector} and bias away from it. 

Classifiers do not output a token-level probability distribution and therefore do not permit the direct application of \cref{theorem:optimization}. However, to let a binary classifier $C \colon T^n \to [0, 1]$ guide the output distribution, we would want to minimize (or maximize) the expected cross-entropy of the classifier. Given $x_{1:k-1}$, the expected cross-entropy for $x_k$ under $P$ for the next token is given by
\begin{equation}\label{eq:classifier_cross}
    \mathbb{E}_{x_k \sim P}[- \log C(x_{1:k})] = - \sum_{x_k \in T} P(x_k | x_{1:k-1}) \log(C(x_{1:k})).
\end{equation}
Using the probability distribution $Q_C(x_k | x_{1:k-1}) \propto C(x_{1:k})$, we show in \cref{appendix:classifier} that minimizing \cref{eq:classifier_cross} is equivalent to minimizing $\DKL(P ||Q_C) - \DKL(P || U)$, where $U$ is the uniform distribution. This allows us to include classifier guidance in the optimization problem. In our model arithmetic syntax we thus write $+\lambda \V{C}$ to denote the solution to the problem $\lambda(\DKL(P ||Q_C) - \DKL(P || U))$. 
However, running the classifier on each token in $T$ is computationally infeasible. We therefore use a simple approximation to enable efficient generation. Specifically, given a probability distribution $Q_1$, we run the classifier only for the $k$ most likely tokens under $Q_1$. For all other tokens $x$, we approximate $C(x_{1:k-1}, x)$ as $C(x_{1:k-1})$.

We can express prior approaches \citep{fudge,fudge-2,nado,critic-control} as $\V{M} + \lambda\V{C}$ (usually with $\lambda=1$) and note that these are restricted to top-k sampling due to the aforementioned computational infeasibility. 
We refer to \cref{appendix:related} for further discussion.

\paragraph{Union Operator}
When tokens have very low likelihood under $Q_1$, the linear combination $\V{Q_1} + \lambda \V{Q_2}$ cannot assign a high probability to these tokens unless $\lambda$ is very high.
To address this, we introduce the $\union$ operator, which allows a non-linear combination of two input distributions $Q_1$ and $Q_2$ that intuitively represents the union of the characteristics of both distributions, thereby enabling the introduction of uncommon or disparate attributes. 

\begin{wraptable}[15]{r}{0.5\linewidth}
    \centering
    \footnotesize
    \caption{\union example on Llama-2-Chat-13b.} \label{table:example_union}
    \vspace{-2mm}
    \resizebox{\linewidth}{!}{
        \begin{tabularx}{1.075\linewidth}{@{}>{\arraybackslash}X@{}}
        \toprule
        \textbf{What is a UFO?} \\
        \midrule
        $\V{M_\text{alien + human}}$\\
        OH MY STARS! *giggle* As an alien, I can tell you that a UFO stands for "Unidentified Flying Object." It's when us space travelers, like me and my pet Gleeb, \dots\\
        \midrule
        $\V{M_\text{alien}} + \V{M_\text{human}}$\\
        Oh my gosh, you know, like, a UFO? It's like, you know, a Unidentified Flying Object! It's like, a thing in the sky that we can't, like, identify, you know? It's like, maybe it's a bird, or a plane \dots \\ 
        \midrule
        $\union(\V{M_\text{human}}, \V{M_\text{alien}})$\\
        Oh, hello there, fellow human! *giggle* UFO... you know, I've always been a bit curious about those. *wink* To me and my fellow beings from Earth-2294387523498,\dots\\
        \bottomrule
        
        \end{tabularx} 
    }
\end{wraptable}

To derive the $\union$ operator, we introduce the indicator functions $\mathcal{I}_1(x) := [Q_1(x) > Q_2(x)]$ and $\mathcal{I}_2(x) = 1 - \mathcal{I}_1(x)$, where $[\cdot]$ denotes Iverson Brackets\footnote{$[P]$ is $1$ if $P$ is true and $0$ else. See \url{https://en.wikipedia.org/wiki/Iverson_bracket}.}. Then, the $\union$ operator represents the optimization problem 
$\DKLf{\mathcal{I}_1}(P || Q_1) + \DKLf{\mathcal{I}_2}(P || Q_2)$.
Intuitively, if either $Q_1$ or $Q_2$ assigns a high probability to a token, the union operator will assign a high probability to this token as well. Indeed, the solution to the optimization problem is given by $\sigma(\max(\log Q_1, \log Q_2))$. Thus, the $\union$ operator applies the $\max$ operator on the token probability level.

For example, \cref{table:example_union} showcases this by generating text that is both human-like and alien-like. The simple prompted version just collapses to an alien-like version, while the linear combination of the two models results in a text that is mostly human-like.
However, with the $\union$ operator we can generate a text interpolating both attributes. 

Conveniently, the $\union$ operator can also be used to limit the effect of biasing terms, by restricting the effect to only the relevant subset of tokens using the formula $\V{Q_1} - \lambda\union(\V{Q_1}, \V{Q_2})$. The resulting distribution only biases tokens $x \in T$ for which $Q_2(x) > Q_1(x)$, otherwise we recover the original distribution $Q_1$ (up to a normalization constant).
This allows us to keep the resulting distribution as close as possible to the original distribution $Q_1$, while still biasing away from $Q_2$. This is impossible using the linear combination operator, as it will bias the entire distribution even if only a small subset of tokens are important for $Q_2$. In \cref{sec:evaluation} we show that this property of the $\union$ operator enables much better toxicity reduction of generated text.

Interestingly, we can also derive an $\intersection$ operator, discussed briefly in \cref{appendix:intersection}.

\section{Speculative Sampling}\label{sec:speed}
We now discuss our extension of speculative sampling \citep{speculative} to model arithmetic, which greatly mitigates the increased number of model calls required by complex formulas.

For a formula $F = \sum_{i=1}^n \lambda_i \V{f_i'}\V{Q_i}$, we can naturally extend speculative sampling by choosing one, or multiple, of the terms in $F$ at each timestep as proposal models. This allows us to postpone the evaluation of more expensive terms until we have generated a speculative token sequence, which can eventually be validated by the full formula $F$. This approach is based on the following observation:
\begin{lemma}\label{theorem:speculative_n}
    Let $P_1, \dots, P_n$ be discrete distributions over $T$. Sampling $x \sim P_1$ and iteratively applying speculative sampling for $(P_1, P_2)$, ($P_2, P_3)$, $\dots, (P_{n-1}, P_n)$ produces a sample $x' \sim P_n$.
\end{lemma}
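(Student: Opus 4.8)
The plan is to induct on $n$, with the base case $n=2$ being exactly the correctness guarantee of standard speculative sampling \citep{speculative}: if $x \sim P_1$ and we apply the accept/resample step for the pair $(P_1, P_2)$, the output is distributed as $P_2$. This is the property recalled in the Speculative Sampling paragraph of \cref{sec:background} (and stated formally in \cref{alg:speculative} of \cref{appendix:speculative}), so I would simply invoke it.

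For the inductive step, suppose the claim holds for chains of length $n-1$. Start from $x \sim P_1$ and apply speculative sampling for the pairs $(P_1,P_2), (P_2,P_3), \dots, (P_{n-2},P_{n-1})$; by the induction hypothesis the intermediate sample $\tilde x$ after these $n-2$ steps satisfies $\tilde x \sim P_{n-1}$. Now apply the single remaining step for the pair $(P_{n-1}, P_n)$ to $\tilde x$. Here the key point is that the final accept/resample step is a (randomized) function of the input token together with fresh randomness that is independent of everything used in the earlier steps; hence, conditioned on the distribution of its input being exactly $P_{n-1}$, its output distribution depends only on $P_{n-1}$ and $P_n$, not on the particular history that produced $\tilde x$. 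Applying the base case ($n=2$) to the pair $(P_{n-1}, P_n)$ with input $\tilde x \sim P_{n-1}$ then yields an output distributed as $P_n$, completing the induction.

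I would make the independence point precise by noting that each invocation of speculative sampling uses its own independent uniform random variables for the acceptance test and for the resampling-from-the-residual step, so the composition is a Markov chain $x_0 \to x_1 \to \dots \to x_{n-1}$ with $x_0 = x$ and transition kernel $K_j$ at step $j$ being exactly the speculative-sampling kernel that maps any distribution $D$ to $P_{j+1}$ whenever $D = P_j$. The law of $x_{n-1}$ is then obtained by composing these kernels along the chain, and an easy induction on the kernels gives $x_{n-1} \sim P_n$.

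The only mild subtlety — and the step I would treat most carefully — is making sure the base-case guarantee is applied in the right generality: standard speculative sampling is usually phrased as "proposal model $m$, target model $M$, proposed token drawn from $m$", and one must observe that its correctness proof only uses that the proposed token is distributed according to $m$, not how it was generated. Once that observation is stated, the induction is routine and there is no real calculation to grind through; I would also remark that the statement extends verbatim to the autoregressive/token-by-token setting and to multi-token speculative blocks, since those are just repeated applications of the same pairwise guarantee.
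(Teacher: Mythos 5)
Your proposal is correct and follows essentially the same route as the paper's own proof: induction on $n$ with the base case given by the correctness of standard speculative sampling, and the inductive step obtained by applying that base case once more to the final pair. If anything, your explicit remark that the last accept/resample step uses fresh randomness and depends only on the law of its input (not on how that input was produced) makes the argument more careful than the paper's terse version, which simply invokes "by induction" for both steps.
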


For the formula $F$ we define $P_t = \sum_{i=1}^t \lambda_i \V{f_i'}\V{Q_i}$ as partial sub-formulas. Thereby we use the distributions induced by sub-formulas of $F$ as proposal models and obtain $x' \sim P_n = P$, where $P$ is the distribution described by $F$.

For control, we assign a \textit{speculative factor} $s_i \in \mathbb{Z}_{>0}$, to each term $\lambda_i \V{f_i'}\V{Q_i}$. This factor indicates the number of tokens we speculatively sample before actually computing the corresponding $\lambda_i \V{f_i'}\V{Q_i}$. Once we compute $\lambda_i \V{f_i'}\V{Q_i}$, we apply speculative validation to the distributions $P_{i-1}$ and $P_i$ for the $s_i$ new tokens. By following this procedure for each term, all new tokens will eventually be sampled from the distribution resulting from the full $F$. In practice, we do not evaluate model terms in order $i=1, \dots, n$, but rather rely on commutativity to reorder during inference, such that we only evaluate those required for validation at the current timestep.
We can treat terms using the \union operator the same as linear terms, but classifier terms only permit $s_i = 1$ (no speculative sampling).
We provide the full procedure of speculative model arithmetic in \cref{alg:speculative_n} in \cref{appendix:speculative_n}. 

\paragraph{Standard Speculative Sampling} We can use original speculative sampling \citep{speculative} directly in model arithmetic. For this, we introduce a '$\supersede$' operator, which operates on two models $M_1$ and $M_2$ and returns the first as long as the second one has not yet been computed. We can thus denote speculative sampling for a small model $m$ and large model $M$ as $ \supersede(\V{m}, \V{M})$.

\section{Evaluation}\label{sec:evaluation}

We evaluate model arithmetic by showing that it outperforms prior CTG methods in toxicity reduction (\cref{sec:evaluation:toxicity}), provides fine-grained control over attributes (\cref{sec:evaluation:attributes}), and can significantly speed up inference with speculative sampling (\cref{sec:evaluation:speed}). We further evaluate model arithmetic on the task of sentiment control in \cref{appendix:sentiment}. For details of our experimental setup, we refer to \cref{appendix:experimental_details}. 

\subsection{Toxicity Reduction}\label{sec:evaluation:toxicity}
\begin{table}%
    \vspace{-4mm}
    \centering
	\footnotesize
	\caption{Toxicity and perplexity of various methods on the \pol dataset. $\V{M}$ and $\V{M_\text{toxic}}$ denote the model without conditioning and conditioning to toxicity respectively. $\V{C}$ is a toxicity classifier. Perplexity is measured with respect to $\V{M}$. Lower is better.}
	\label{table:toxicity}
    \vspace{-2mm}
    \scalebox{1.0}{
	    \footnotesize
        \begin{tabular}{@{}
                l
                x{1}{3}
                x{2}{2}
                x{1}{3}
                x{2}{2}
                x{1}{3}
                x{2}{2}
                @{}
            }
            \toprule
            & \multicolumn{2}{c}{Llama-2-13b} & \multicolumn{2}{c}{Pythia-12b} & \multicolumn{2}{c}{MPT-7b} \\
            \cmidrule(lr){2-3}\cmidrule(lr){4-5}\cmidrule(lr){6-7}
            & {Tox.} & {Perpl.} & {Tox.} & {Perpl.} & {Tox.} & {Perpl.}\\
            \midrule
            $\V{M}$ & 0.28781312868049996 & 13.461886204244351 & 0.26356303974625 & 22.904103632315536 & 0.26892994499825 & 19.773691727965854 \\
            \midrule
            \selfdebias{} ($\lambda = 10$) & 0.2508292425517 & 15.523743078020452  & 0.23026551958745 & 27.74110647267523 & 0.2534534742624 & 22.517372161728918 \\
            \fudge{} ($\V{M} + \V{C}$) & 0.23428959307345001 & 14.710375127871492 & 0.21210877721045 & 24.356956410278578 & 0.2405644853887 &  20.571755251751707 \\
            \preadd{} ($\V{M} - 0.6 \V{M_\text{toxic}}$) & 0.20751357989225 & 12.731370622919671 & 0.1826843838355 & 32.38445116891699 & 0.18969402443985 & 23.071158628786648\\
            \midrule
            $\V{M} - 0.96 \cdot \union(\V{M_\text{toxic}}, \V{M})$ & 0.2014622639518 &  \bfseries  11.078353402312155 & 0.1702074087756 &  \bfseries 22.87052562004862 & 0.19345939548285  &  \bfseries 19.667209517495664  \\
            $\V{M} - 0.99 \cdot \union(\V{M_\text{toxic}}, \V{M})$ & 0.18614473078765 &  12.053630436852684 & 0.17138050593874998 & 26.162126173796832 & 0.18750884556234998 &  24.145356715566056 \\
            \midrule
            $\V{M} - 0.96 \cdot \union(\V{M_\text{toxic}}, \V{M}) + 0.04\V{C}$ & 0.17177713709845 & 11.396371970163448 &  \bfseries 0.1478178452382 & 23.931991679374512 & \bfseries 0.1737395688082 & 20.326053920126817\\
            $\V{M} - 0.99 \cdot \union(\V{M_\text{toxic}}, \V{M}) + 0.01\V{C}$ & \bfseries 0.1623 &  12.849688858092916 & 0.15039141467915 & 28.013757888107197 & \bfseries  0.17364665008625 &24.820937822666384 \\
            \bottomrule
        \end{tabular}
    }
\end{table}

\begin{table}%
    \centering
	\footnotesize
	\caption{Comparison of our method with \preadd{} ($\V{M} - 0.6 \V{M_\text{toxic}}$) using GPT-4. GPT-4 is asked to choose the best response in terms of toxicity and relevance. Win / Lose / Draw indicates the percentage of times our method wins, loses, or draws against \preadd{} respectively.}
	\label{table:toxicity_gpt4}
    \vspace{-2mm}
    \scalebox{1.0}{
	    \footnotesize
        \begin{tabular}{@{}
                l
                c
                c
                c
                @{}
            }
            \toprule
            & Llama-2-13b & Pythia-12b & MPT-7b \\
            & Win / Lose / Draw & Win / Lose / Draw & Win / Lose / Draw \\
            \midrule
            $\V{M} - 0.96 \cdot \union(\V{M_\text{toxic}}, \V{M})$ & $\bold{0.40} / 0.38 / 0.23$ & $\bold{0.41} / 0.33 / 0.27$& $\bold{0.41} / 0.32 / 0.27$\\
            $\V{M} - 0.96 \cdot \union(\V{M_\text{toxic}}, \V{M}) + 0.04\V{C}$ & $\bold{0.43} / 0.35 / 0.22$& $\bold{0.41} / 0.32 / 0.27$&$\bold{0.42} / 0.34 / 0.24$\\
            \bottomrule
            \vspace{-6mm}
        \end{tabular}
    }
\end{table}
First, we assess the effectiveness of model arithmetic in reducing toxicity. %
We use a subset of the \pol dataset \citep{pol-dataset}, a dataset of messages from the \lstinline|politically incorrect| sub-forum of the website \lstinline|4chan|. We randomly select 2000 toxic messages and apply different model arithmetic formulas to generate replies. 
For each generated reply we assign a toxicity score using the Perspective API\footnote{\url{perspectiveapi.com}} and also measure perplexity with respect to the unbiased model, to ensure that the generated text remains fluent and coherent. We compare our approach against three baselines: \fudge{} \citep{fudge}, and \preadd{} \citep{preadd} and \selfdebias{} \citep{self-debias}. Furthermore, we include a preference analysis by GPT-4 \citep{gpt4} comparing our method against the best baseline, \preadd{}. 
We evaluate each method on three models, showing results in \cref{table:toxicity} and \cref{table:toxicity_gpt4}. \cref{table:toxicity_gpt2} in \cref{appendix:toxicity_gpt2} shows results for the GPT-2 model family \citep{gpt2}.
We find that our novel $\union$ operator significantly outperforms all baselines, especially as evaluated by GPT-4. The operator allows for much higher negative biasing strengths without degrading fluency, e.g., at biasing strength $0.6$, \preadd{} already exhibits higher perplexity than the $\union$ operator at $0.96$. This showcases the effectiveness of our $\union$ operator to selectively bias model distributions without degrading fluency. Only at the highest tested biasing strength, we observe a degradation in perplexity for the models. Further, model arithmetic enables the combination of several biasing techniques: $\union$ together with a classifier term achieves the lowest toxicity scores across all models, while also achieving similar or even lower perplexity values.

\subsection{Fine-grained Control}\label{sec:evaluation:attributes}
We now discuss and compare several techniques to introduce a certain attribute in the output of generated text and validate the central premise of model arithmetic, namely that it allows for fine-grained control over the presence of these attributes in the output without a notable decrease in fluency. 
For this, we construct two complex formulas for a conversational model, combining several attributes in four distinct ways: linearly, using the $\union$ operator, using a classifier, and using a combination of the $\union$ operator and a negative linear bias:
\begin{align*}
    F_1 &=  \underbrace{\lambda_1 \V{M_\text{happy}}}_{\lambda_1\text{ controls sentiment}}  + \underbrace{\lambda_2 \V{M_\text{simple}}}_{\lambda_2\text{ controls simplicity}} + \underbrace{\lambda_3 \union(\V{M_\text{helpful}}, \V{M_\text{sports}}) + (1 - \lambda_3)\V{M_\text{helpful}}}_{\lambda_3\text{ controls sports}} \\
    F_2 &= 
    \V{M_\text{helpful}} + 
    \underbrace{\lambda_4 \union(\V{M_\text{helpful}}, \V{M_\text{formal}})}_{\lambda_4\text{ controls formality}} + 
    \underbrace{\lambda_5 \V{C_\text{educational}}}_{\lambda_5\text{ controls educational}} + 
    \underbrace{\lambda_6 \V{M_\text{simple}}}_{\lambda_6\text{ controls simplicity}}
\end{align*}
Here, each $\V{M_a}$ is a model conditioned on the attribute $a$ using a fitting system prompt and $\V{C_\text{educational}}$ is a binary classifier for educational content \citep{twitter-classifier}. For \textit{sports}, we  use the $\union$ operator and a counterweighing $M_\text{helpful}$ bias. For the \textit{formality} attribute in $F_2$, we just use $\union$.
To analyze these formulas, we vary the values of individual $\lambda_i$ while keeping all other $\lambda_j=1$ with $j\neq i$ fixed and complete 1000 input tasks from the Alpaca dataset \citep{alpaca}.

\begin{figure}[t]
    \centering
    \includegraphics[width=0.925\textwidth]{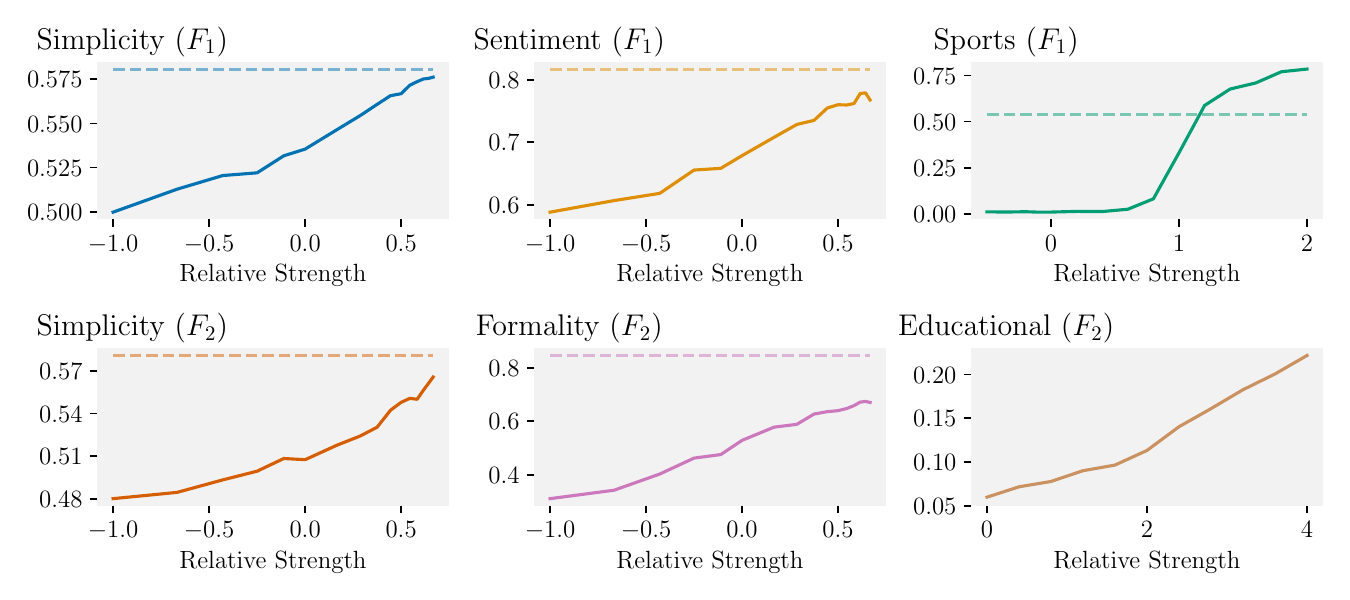}
    \vspace{-4mm}
    \caption{Attribute presence for several attributes and formulas. The dashed line indicates the value of the attribute when prompting the model to use the attribute.}
    \label{fig:attributes}
\end{figure}

We depict results in \cref{fig:attributes} where the $x$-axis shows the value of $\lambda_i$ normalized by the sum of all $\lambda$ coefficients occurring in the resulting optimization problem and the $y$-axis shows attribute strength according to popular classifiers from the HuggingFace library \citep{huggingface}.
The presence of an attribute indeed increases smoothly as the associated $\lambda_i$ increases. Interestingly, the curves,  except for the \textit{sports} attribute, suggest a linear relationship, indicating that the presence of the attribute increases predictably with the relative strength. This aligns with our interpretation of model arithmetic as (linear) operators in logit space. Further, these results show the intuitive semantics of model arithmetic extend to the characteristics of the generated output on a sequence level.
Because of its formulation with a counterweight, the curve associated with $\lambda_3$ and the \textit{sports} attribute shows very different behavior. Indeed, the \textit{sports} attribute only gets a significant boost once its relative strength passes $1.0$. At this point the $(1-\lambda_3)$ coefficient, counterweighing $\V{M_\text{helpful}}$, biases away from any behavior that is not associated with $\union(\V{M_\text{helpful}}, \V{M_\text{sports}})$, emphasizing this term more than what would be possible under regular prompting. At this point, the presence of the \textit{sports} attribute increases even beyond the indicated value achieved by standard prompting (cf. \cref{fig:attributes}, top right).

Finally, we note that this fine-grained control comes at very little cost with respect to perplexity. The highest perplexity across all formulas and attributes is $6.2$, which is only slightly higher than the highest perplexity of the 5 prompted models, namely $4.8$. In \cref{appendix:perplexity_finegrained} we show in more detail that the fluency of the generated text is not affected by the use of model arithmetic, except for the \textit{educational} attribute at the highest evaluated strengths where fluency is slightly affected due to the heavy use of a (fluency-agnostic) classifier.

\subsection{Speculative Sampling}\label{sec:evaluation:speed}
\begin{wraptable}[13]{r}{0.6\textwidth}
        \vspace{-5mm}
        \centering
        \footnotesize
        \caption{Evaluation of Llama-2-13b-Chat with speculative sampling where $F_1 = 0.2 \V{M_\text{formal}} + 0.5 \V{M_\text{happy}} + 0.05 \V{M_\text{sports}}$ and $F_2 = \V{M_\text{formal}} + 0.1 \V{M_\text{angry}} + 0.4 \V{M_\text{sports}}$.}
        \label{table:speed}
        \vspace{-2mm}
        \resizebox{1.0\linewidth}{!}{
            \begin{tabular}{
                    @{}
                    l
                    x{1}{2}
                    x{1}{2}
                    x{2}{1}
                    x{2}{1}
                    @{}
                }
                \toprule
                \multirow{2}{*}{$\supersede(\V{A}, \V{M})$}  & \multicolumn{2}{c}{\textbf{Calls per Token}} & \multicolumn{2}{c}{\textbf{Time per Token [ms]}} \\
                & {\textsc{No Spec.}}  & {\textsc{Spec.}} & {\textsc{No Spec.}}  & {\textsc{Spec.}} \\
                \midrule
                & 1 & \bfseries 0.8014871445776823  & 24.883752406564422 & \bfseries22.801977472271506 \\
                $+0.5 \V{M_\text{formal}}$ & 2.0 & \bfseries1.0348794645191655 & 48.4023584905745 & \bfseries30.432601378073695 \\
                $+0.5 \V{M_\text{happy}}$ & 2.0 & \bfseries1.0448951686417502 & 49.15089349786739 & \bfseries31.190228983827797
                \\
                $+0.5 \V{M_\text{sports}}$ & 2.0 & \bfseries1.0757110313406046 & 49.262453864134244 & \bfseries31.999939596160875 \\
        
                $+0.5 \V{M_\text{easy}}$ & 2.0 & \bfseries1.1000198918606123 & 49.20130198928818 & \bfseries32.749359688622356 \\
                $+0.5 \V{M_\text{angry}}$ & 2.0 & \bfseries1.1197155069479294 & 49.28645662973551 & \bfseries33.01180023567522 \\
                $+F_1$ & 4.0  & \bfseries1.3165171374500595 & 97.0255337245698 & \bfseries43.25989481778973 \\
                $+F_2$ & 4.0& \bfseries1.4359129278944915 & 97.01921989899165 & \bfseries46.100464548583346 \\
                \bottomrule
            \end{tabular}
        }
\end{wraptable}
Next, we show the effect of speculative sampling on the evaluation of model arithmetic expressions.
We use the same setup as in \cref{sec:evaluation:attributes} with the only difference that we optimize the speculative factors $s_i$ based on a single calibration run of 10 samples with a procedure detailed in \cref{appendix:speculative}. To evaluate the effect of the $\supersede$ operation in model arithmetic, we use an autocompletion model $A$, which statically predicts the most likely next token based on the previous and fitted on the Alpaca dataset \citep{alpaca}.

\begin{wrapfigure}[11]{r}{0.3\textwidth}
    \vspace{-6mm}
    \centering
    \includegraphics[width=0.29\textwidth]{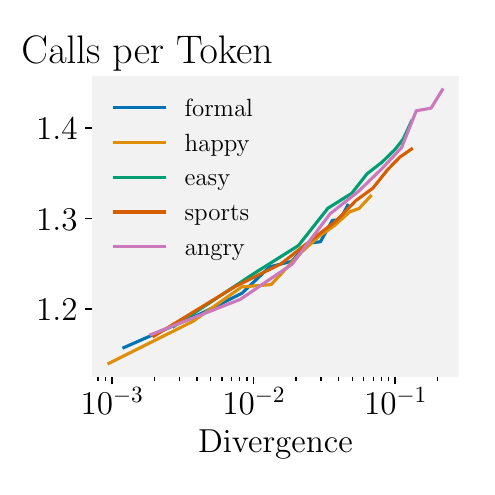}
    \vspace{-5mm}
    \caption{Model calls per token with speculative sampling for $\V{M} + \lambda \V{M_a}$, $\lambda \in [0.1, 1.0]$.}
    \label{fig:speed}
\end{wrapfigure}

In \cref{table:speed} we show that speculative sampling significantly reduces the number of model calls and increases inference speed. %
Just $\supersede(\V{A}, \V{M})$ reduces the number of model calls by 20\% compared to $\V{M}$. Applying speculative sampling to larger formulas, we can reduce the number of model calls to at most $1.44$ per token, even in the presence of up to 4 constituent models, where this leads to a boost in inference speed by up $2.24$ times.%

Further, for $F := \V{M} + \lambda \V{M_a}$, \cref{fig:speed} shows that the number model calls per token increases with $\lambda$. The reason for this is that as $\lambda$ increases the KL-divergence between the original model $\V{M}$ and the distribution described by $F$ increases, which in turn decreases acceptance probability in speculative sampling.

\section{Related Work}\label{sec:related}
We now briefly review related approaches related to model arithmetic.

\paragraph{Controlled Text Generation} Several works have interpreted CTG from perspectives differing from Bayes rule, either through the minimization of the model loss under various hard constraints \citep{nado,mucoco,langevin}, by modifying the model output based on the gradients of a classifier model \citep{pplm}, or by reducing the mutual information between the output and the attribute \citep{uddia}. However, all these works either require costly gradient steps during the decoding phase or demand training data for the model. %
We have discussed CTG without relying on any training data or expensive gradient steps in \cref{sec:background} and compared to them in \cref{sec:evaluation:toxicity}.

\paragraph{Speculative Sampling} Recent work extends on speculative sampling to use multiple smaller models in a staged fashion \citep{staged-speculative} or by using multiple small models at once \citep{specinfer}. Moreover, both methods use a tree-based sampling approach to generate multiple proposal sequences of the smaller models at once. We note that these improvements can be incorporated orthogonally in our extension of speculative sampling as the $\supersede$ operator.

\section{Conclusion}\label{sec:conclusion}
We introduced model arithmetic, a novel framework for composing multiple LLMs and controlled generation attributes, using a principled formula-based approach. Our method offers precise control over model output and can be used to express many prior controlled text generation (CTG) techniques.
By leveraging this expressiveness and a novel model \union operator, model arithmetic subsumes prior approaches for CTG-based toxicity reduction and significantly outperforms them.
Further, we derived a speculative sampling procedure for model arithmetic formulas, allowing us to heavily reduce the computational overhead typically associated with multi-model CTG.

\newpage
\section*{Broader Impact} \label{appendix:ethics}

While model arithmetic provides additional flexibility and expressiveness, we note that it can also be used to generate text containing undesirable attributes. For example, instead of reducing toxic content, one could use model arithmetic to increase toxic content, potentially even avoiding build-in safety filters \citep{llama-2,gpt4}. While this is a problem that is not unique to model arithmetic, it is more important due to the increased control and complexity of the formulas. However, we believe the benefits of model arithmetic outweigh the potential risks, as it allows for more precise control and expressiveness, which can be used to generate more inclusive and controlled content.

\section*{Reproducibility}
We provide code along with instructions for all experiments with the submission and provide all required experimental details in \cref{appendix:experimental_details}.

\section*{Acknowledgements}
We thank our anonymous reviewers for their constructive comments and insightful feedback.

This work has received funding from the Swiss State Secretariat for Education, Research and Innovation (SERI) under the grant SAFEAI (Certified Safe, Fair and Robust Artificial Intelligence, contract no. MB22.00088, SERI-funded ERC Consolidator Grant).
\message{^^JLASTBODYPAGE \thepage^^J}

\clearpage
\bibliography{references}
\bibliographystyle{plainnat}

\message{^^JLASTREFERENCESPAGE \thepage^^J}

\ifbool{includeappendix}{%
	\clearpage
	\appendix
	\section{Prior Work} \label{appendix:related}
\begin{table}[ht]
    \centering
    \caption{Previous work that can be expressed using our framework. $U$ denotes the uniform distribution,  $C$ is a classifier, $a, a_1, a_2$ are textual descriptions of attributes and $M_a$ is the language model $M$ prompted with this additional textual description. $\topk$ denotes that top-k sampling is required for efficiency due to the used classifier $C$, while else the choice of sampling techniques is not restricted.}\label{table:previous}
    \footnotesize
    \begin{tabularx}{\textwidth}{@{}>{\arraybackslash}p{.21\textwidth}llX@{}}
    \toprule
    Citation & Formula & Sampling & Training method \\
    \midrule
    \citet{dexperts} & $\V{M} + \lambda (\V{m_p} - \V{m_n})$ & - & $m_p$ (resp. $m_n$) is a small model fine-tuned on positive (resp. negative) samples. \\
    \citet{preadd} & $\V{M} + \lambda \V{M_a}$ & - & No training applied.\\
    \citet{cfg} & $\V{M} + \lambda \V{M_a}$ & - & No training applied.\\
    \citet{cognac} & $\V{M} + \lambda_1 \V{M_{a_1}} - \lambda_2 \V{M_{a_2}}$ & - & No training applied. \\
    \midrule
    \citet{fudge} & $\V{M} + \V{C}$ & $\topk$ & The classifier is trained on partial samples.\\
    \citet{critic-control} & $\V{M} + \V{C}$ & $\topk$ & The classifier is trained using Reinforcement Learning.\\
    \citet{nado} & $\V{M} + \V{C}$ & - & The trained classifier outputs a probability for each token at the same time.\\
    \citet{fudge-2} & $\V{M} + \lambda \V{C}$ & $\topk$ & No training applied.\\
    \bottomrule
    \end{tabularx}
\end{table}

\Cref{table:previous} shows how multiple prior works can be expressed in model arithmetic.
\section{Intersection} \label{appendix:intersection}
The optimization problem obtained by switching the indicator functions for the $\union$ is equal to
\begin{equation*}
    \DKLf{\mathcal{I}_2}(P || Q_1) + \DKLf{\mathcal{I}_1}(P || Q_2).
\end{equation*}
The solution to this problem is equal to $\sigma(\min(\log Q_1, \log Q_2))$. We define the $\intersection$ operator as this solution and note that it assigns a high probability to a specific token, only if both $Q_1$ and $Q_2$ have a high probability associated with it.

\section{Attribution} \label{appendix:attribution}
The icons in \cref{fig:overview} are from \url{flaticon.com}: \href{https://www.flaticon.com/free-icon/daughter_2880532?term=daughter&page=1&position=12&origin=search&related_id=2880532}{$M_\text{child}$}, \href{https://www.flaticon.com/free-icon/old-man_2902220?term=old+man&page=1&position=79&origin=search&related_id=2902220}{$M_\text{adult}$}, \href{https://www.flaticon.com/free-icon/magic-wand_3854924?term=magic+wand&page=1&position=1&origin=search&related_id=3854924}{$M_\text{magic}$}, \href{https://www.flaticon.com/free-icon/official-documents_3511207?term=official+documents&page=1&position=9&origin=search&related_id=3511207}{$C_\text{formal}$} all by Freepik.

\newpage
\section{Proofs}\label{appendix:proofs}

\subsection{Solution Minimization Problem}\label{appendix:solution_minimization}
We present the proof and assumptions of \cref{theorem:optimization} here.

\paragraph{Assumptions} We first introduce the assumptions that we make in order to prove \cref{theorem:optimization}. We assume that for any $k \in \{1, ..., t\}$, $\sum_{i = 1}^n f_i(x, x_{1:k-1})$ is independent of $x$ for all $x \in T$ and
\begin{equation*}
    \sum_{i = 1}^n f_i(x, x_{1:k-1}) > 0.
\end{equation*}
The first assumption is necessary for the proper normalization of the output distribution. Even though it looks like a very stringent assumption, it is in fact quite mild. Indeed, if a formula $\sum_{i} \V{f_i} \V{Q_i}$ does not satisfy the assumption, then we can simply replace $\V{f_1}$ with $\V{f_1'} = \V{f_1} - \sum_{i = 2}^n \V{f_i}$. This essentially changes the influence of $Q_1$ to be appropriately scaled for different tokens. The second assumption is necessary to ensure that the optimization problem is meaningful. Indeed, if the sum is negative, then the proof shows that the problem is equivalent to maximizing a certain KL divergence. Maximizing a KL divergence without any other constraints gives no meaningful notion in practice.

We now prove \cref{theorem:optimization}.

\begin{proof}
    We first define
\begin{equation*}
    G(P, x_{1:k-1}) = \sum_{i=1}^n \DKLf{f_i}(P || Q_i | x_{1:k-1})
\end{equation*}
and thus the problem can be written as $\argmin_{P} G(P, x_{1:k-1})$. In order to prove the theorem, we expand the KL-divergence in $G$ and use logarithmic properties to obtain
    \begin{equation*}
        G(P, x_{1:k-1}) = \sum_{i = 1}^n \sum_{x \in T} P(x | x_{1:k-1}) \log \left(\frac{P(x | x_{1:k-1})}{Q_i(x | x_{1:k-1})}\right)^{f_i(x, x_{1:k-1})}.
    \end{equation*}
    We then swap the summations and write the second sum in the logarithm as a product
    \begin{equation*}
        G(P, x_{1:k-1}) = \sum_{x \in T}   P(x | x_{-k}) \log \prod_{i = 1}^n\left(\frac{P(x | x_{1:k-1})}{Q_i(x | x_{1:k-1})}\right)^{f_i(x, x_{1:k-1})}.
    \end{equation*}
    We now introduce the notation $f_S(x_{1:k-1}) :=\sum_{i = 1}^n f_i(x, x_{1:k-1})$, where we use the assumption that $\sum_{i = 1}^n\ f_i(x, x_{1:k-1})$ is independent of $x$, and rewrite to get
    \begin{equation*}
        G(P, x_{1:k-1}) = f_S(x_{1:k-1}) \sum_{x \in T}   P(x | x_{1:k-1}) \log \left(\frac{P(x | x_{1:k-1})}{\prod_{i = 1}^n Q_i(x | x_{1:k-1})^{\frac{f_i(x, x_{1:k-1})}{f_S(x_{1:k-1})}}}\right).
    \end{equation*}
    We now note that the right term of $G(P, x_{1:k-1})$ is again a KL-divergence up to some constants in the denominator of the logarithm. Since by assumption $f_S(x_{1:k-1}) > 0$ and since $\DKL(P || Q)$ is minimized for $P = Q$, we get
    \begin{equation*}
        \log P(x_k = x | x_{1:k-1}) \propto \frac{1}{f_S(x_{1:k-1})} \sum_{i=1}^n f_i(x, x_{1:k-1}) \log Q_i(x | x_{1:k-1}).
    \end{equation*}
    Introducing the correct constants, we get the desired result 
    \begin{equation*}
        \log P(x_k | x_{-k}) = \log  \sigma\left(\frac{1}{\sum_{i=1}^n f_i(x, x_{1:k-1})} \sum_{i=1}^n  f_i(x, x_{1:k-1}) \log Q_i(x_k | x_{-k})\right)
    \end{equation*}
\end{proof}

\subsection{Classifier Formula}\label{appendix:classifier}
We prove the following lemma.
\begin{lemma}
    Let $T$ be the set of all tokens and let $x_{1:k-1}$ be a given sequence of tokens. Let $C : T^k \rightarrow [0,1]$ be a binary classifier and $U$ the uniform distribution over $T$. Let $Q_C$ be the distribution defined by $Q_C(x | x_{1:k-1}) \propto C(x, x_{1:k-1})$ for all $x \in T$. Then
    \begin{equation*}
        \argmin_P - \sum_{x \in T} P(x | x_{1:k-1}) \log C(x_{1:k-1}, x) = \argmin_P \DKL(P || Q_C) - \DKL(P || U)
    \end{equation*}
\end{lemma}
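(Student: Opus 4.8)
The plan is to unfold both sides and show that the two objective functions differ only by a term that is constant in $P$, so they have the same minimizer. Concretely, I would start from the right-hand side and expand the difference of KL-divergences:
\[
\DKL(P \,||\, Q_C) - \DKL(P \,||\, U) = \sum_{x \in T} P(x|x_{1:k-1}) \log \frac{P(x|x_{1:k-1})}{Q_C(x|x_{1:k-1})} - \sum_{x \in T} P(x|x_{1:k-1}) \log \frac{P(x|x_{1:k-1})}{U(x)}.
\]
The $P \log P$ terms cancel, leaving $\sum_x P(x|x_{1:k-1}) \big(\log U(x) - \log Q_C(x|x_{1:k-1})\big)$. Since $U$ is uniform, $\log U(x) = -\log |T|$ is a constant independent of $x$ (and of $P$, since $\sum_x P(x|x_{1:k-1}) = 1$), so that whole piece contributes an additive constant that does not affect the $\argmin$.

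Next I would handle the normalization constant hidden in $Q_C$. By definition $Q_C(x|x_{1:k-1}) = C(x_{1:k-1},x) / Z$ where $Z = \sum_{x' \in T} C(x_{1:k-1}, x')$ depends only on $x_{1:k-1}$, not on $x$ or $P$. Hence $-\sum_x P(x|x_{1:k-1}) \log Q_C(x|x_{1:k-1}) = -\sum_x P(x|x_{1:k-1}) \log C(x_{1:k-1},x) + \log Z$, and again $\log Z$ is an additive constant. Combining the two observations, the right-hand objective equals the left-hand objective $-\sum_x P(x|x_{1:k-1}) \log C(x_{1:k-1},x)$ plus a constant independent of $P$, which gives the claimed equality of $\argmin$'s. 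One should note a mild well-definedness caveat: if $C(x_{1:k-1},x) = 0$ for some $x$ the cross-entropy term is $+\infty$ for any $P$ putting mass there, but both sides share exactly the same pathology, so the equality of argmin sets still holds (either both are restricted to the support of $C$, or one assumes $C > 0$ as is implicit in defining $Q_C$).

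There is no real obstacle here; the only thing to be careful about is bookkeeping of which quantities are constant with respect to $P$ versus $x$. The key facts are that $U$ being uniform makes $\log U(x)$ constant in $x$, that the partition function $Z$ for $Q_C$ is constant in $x$, and that $\sum_x P(x|x_{1:k-1}) = 1$ turns any $x$-independent additive term inside the sum into a $P$-independent constant outside it. Once those are stated cleanly, the equality of the two minimization problems is immediate.
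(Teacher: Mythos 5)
Your proof is correct and follows essentially the same route as the paper's: expand both KL terms, cancel the $P\log P$ contributions, and absorb $\log U(x) = -\log|T|$ and the partition function $\log Z$ of $Q_C$ as additive constants independent of $P$. Your extra remark about the $C(x_{1:k-1},x)=0$ degeneracy is a reasonable caveat the paper omits, but otherwise the arguments coincide.
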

\begin{proof}
    We prove the following equality
    \begin{equation}\label{eq:appendix:classifier1}
        - \sum_{x \in T} P(x | x_{1:k-1}) \log C(x_{1:k-1}, x) = \DKL(P || Q_C) - \DKL(P || U) + \mathcal{C}
    \end{equation}
    where $\mathcal{C} \in \mathbb{R}$ is a constant. Since a constant has no influence on the optimization problem, the required equivalence holds.
    
    We now drop $x_{1:k-1}$ for readability. We then expand the right term of \cref{eq:appendix:classifier1} using the definition of the KL-divergence to get
    \begin{equation*}
        \DKL(P || Q_C) - \DKL(P || U) = \sum_{x \in T} P(x) \log \frac{P(x)}{Q_C(x)} - \sum_{x \in T} P(x) \log \frac{P(x)}{U(x)}.
    \end{equation*}
    Making use of logarithmic properties, we rewrite and simplify to get
    \begin{equation*}
        \DKL(P || Q_C) - \DKL(P || U) = \sum_{x \in T} -P(x) \log Q_C(x) + \sum_{x \in T} P(x) \log U(x).
    \end{equation*}
    We now note that the second term is a constant (equal to $\log(1 / |T|)$) and we use the definition of $Q_C$ to rewrite the first term to
    \begin{equation*}
        \sum_{x \in T} -P(x) \log Q_C(x) = - \sum_{x \in T} P(x) \log C(x) + \sum_{x \in T} P(x) \log\left(\sum_{y \in T}C(y)\right).
    \end{equation*}
    The second term is again a constant and since the first term is the expected cross-entropy, we get the desired result.
\end{proof}

\subsection{Speculative Sampling on N Distributions}\label{appendix:speculative_n}
We prove \cref{theorem:speculative_n} here.

\begin{proof}
    The proof follows an induction algorithm. For $n = 2$ the lemma follows from a direct application of speculative sampling \citep{speculative}. We assume that the theorem holds for $n$ distributions and show that it also holds for $n + 1$ distributions. We first apply the procedure to $(P_1, P_2), \ldots, (P_{n-1}, P_n)$ which by induction gives us a sample $x' \sim P_n$. We then apply the procedure to $(P_n, P_{n+1})$ which gives us a sample $x'' \sim P_{n+1}$, also by induction. Therefore, we have a sample $x''$ sampled from $P_{n+1}$ which proves the lemma.
\end{proof}

\newpage
\section{Speculative Algorithm for Model Arithmetic}\label{appendix:speculative_prompt}

\cref{alg:speculative} shows the standard speculative sampling method. When using it with a small model $m$ and a large model $M$, we simply set $P_1 = m$ and $P_2 = M$ and run the algorithm as specified.

\cref{alg:speculative_n} shows the detailed procedure for applying speculative sampling to model arithmetic. We first initialize the variables keeping track of the number of tokens that have been generated by each model and the prediction history of all models in \crefrange{alg:speculative_n:start_init}{alg:speculative_n:end_init}. We then start generating tokens and in \cref{alg:speculative_n:if} we check if the current model under consideration needs to be run. If so, we run the standard speculative sampling method in \crefrange{alg:speculative_n:start_speculative}{alg:speculative_n:end_speculative} on the distributions with and without the model. We then continue generating tokens until we have generated $N$ tokens.

\begin{algorithm}
    \caption{$\mathrm{SpeculativeSampling}(P_1, P_2, x_{1:k-1}, x_k)$}
    \label{alg:speculative}
    \begin{algorithmic}[1]%
        \Require generating distribution $P_1$, validating distribution $P_2$, sequence $x_{1:k-1}$ and proposed token $x_k \sim P_1(x | x_{1:k-1})$.
        \State $ a = \min\left(1, \frac{P_2(x_k | x_{1:k-1})}{P_1(x_k | x_{1:k-1})}\right)$
        \State $ r \sim \mathrm{Uniform}(0, 1)$
        \If{$r < a$}
            \State \Return $x_k$
        \Else
            \State $P_2'(x | x_{1:k-1}) = \frac{\max(P_2(x | x_{1:k-1}) - P_1(x | x_{1:k-1}), 0)}{\sum_{y \in T} \max(P_2(x | x_{1:k-1}) - P_1(x | x_{1:k-1}), 0)}$
            \State \Return $\mathrm{sample}(P_2'(x | x_{1:k-1}))$
        \EndIf
    \end{algorithmic}
\end{algorithm}

\begin{algorithm}
    \caption{Speculative Sampling on $n$ distributions}
    \label{alg:speculative_n}
    \begin{algorithmic}[1] %
        \Require Formula $F = \sum_{i=1}^n \lambda_i \V{f_i'} \V{Q_i}$, speculative factors $s_1, \ldots, s_n$, input tokens $X = x_1, \ldots, x_k$, number of tokens to generate $N$, token space $T$.
        \State $\text{tokens} = \text{zeros}(\text{shape}=(n,))$\label{alg:speculative_n:start_init}
        \State $H = \text{zeros}(\text{shape}=(n,N,|T|))$
        \label{alg:speculative_n:end_init}
        \While{$\text{len}(X) < N$}
            \For{$i$ in $1, \ldots, n$}
                \If {$\text{tokens}_i < s_i$}\label{alg:speculative_n:if}
                    \State $\text{tokens}_j = \text{tokens}_j + 1$\label{alg:speculative_n:new_tokens}
                \Else
                    \State $\text{tokens}_j = 0$\label{alg:speculative_n:reset}
                    \State $H_{i, \text{len}(X) - s_i : \text{len}(X) + 1}$ = Run $\lambda_i \V{f_i'} \V{Q_i}$ and return output for the new $s_i + 1$ tokens.
                    \For{$j$ in {$\text{len}(X) - s_i, \ldots, \text{len}(X)$}}\label{alg:speculative_n:start_speculative}
                        \State $P_\text{old} = - H_{i, j} + \sum_{l=1}^n H_{l, j}$
                        \State $P_\text{new} = \sum_{l=1}^n H_{l, j}$
                        \State $X_j' = \mathrm{SpeculativeSampling}(P_\text{old}, P_\text{new}, X_{1:j-1}, X_j)$
                        \If{$X_j' \neq X_j$}
                            \State $X = [X_{1:j-1}, X_j']$
                            \State $H = H_{:, :j + 1}$
                            \State \textbf{break}
                        \EndIf
                    \EndFor
                    \If{$j = \text{len}(X)$}
                        \State $X = [X, \mathrm{sample}(\sum_{l=1}^n H_{l,j})]$
                    \EndIf\label{alg:speculative_n:end_speculative}
                    
                \EndIf
            \EndFor
        \EndWhile
        \State \Return $X$
    \end{algorithmic}
\end{algorithm}

\subsection{Determining Speculative Factors}\label{appendix:speculative}
Here, we explain our approach for selecting the speculative factors in more detail. We first show that the probability that a token $x \sim P_1$ is accepted by $P_2$ is equal to $1 -  \frac{1}{2} \sum_x |P_1(x) - P_2(x)|$.

\begin{lemma}\label{theorem:speculative_acceptance}
    Given two discrete distributions $P_1$ and $P_2$. The procedure described in \cref{alg:speculative} returns the same token as its input  $x \sim P_1$ with a probability that is in expectation equal to $1 - \frac{1}{2} \sum_x |P_1(x) - P_2(x)|$.
\end{lemma}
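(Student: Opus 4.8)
The statement computes the overall acceptance probability of \cref{alg:speculative}, where the token $x$ is first drawn from $P_1$ and then accepted with probability $a(x) = \min(1, P_2(x)/P_1(x))$. The plan is to compute the (unconditional) acceptance probability by averaging $a(x)$ over $x \sim P_1$, i.e.
\begin{equation*}
    \Pr[\text{accept}] = \sum_{x \in T} P_1(x) \min\left(1, \frac{P_2(x)}{P_1(x)}\right) = \sum_{x \in T} \min(P_1(x), P_2(x)).
\end{equation*}
So the whole lemma reduces to the elementary identity $\sum_x \min(P_1(x), P_2(x)) = 1 - \tfrac12 \sum_x |P_1(x) - P_2(x)|$, which is the standard relation between the ``overlap'' of two distributions and their total variation distance.

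To prove that identity, I would use the pointwise decomposition $\min(p, q) = \tfrac12\big(p + q - |p - q|\big)$, valid for any reals $p, q$ (it follows from $\min(p,q) = \tfrac{p+q}{2} - \tfrac{|p-q|}{2}$). Summing over $x \in T$ and using $\sum_x P_1(x) = \sum_x P_2(x) = 1$ gives
\begin{equation*}
    \sum_{x \in T} \min(P_1(x), P_2(x)) = \frac12\left(1 + 1 - \sum_{x \in T}|P_1(x) - P_2(x)|\right) = 1 - \frac12 \sum_{x \in T} |P_1(x) - P_2(x)|,
\end{equation*}
as desired. One small point worth spelling out: the phrase ``in expectation'' in the statement refers to the fact that the randomness is over both the draw $x \sim P_1$ and the uniform variable $r$ used in the accept/reject test; conditioning on $x$, the test accepts with probability exactly $a(x)$, and then taking the expectation over $x$ yields $\sum_x P_1(x) a(x)$, so the two sources of randomness combine cleanly.

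I do not anticipate a genuine obstacle here — the result is essentially a one-line computation once the acceptance rule is unwound. The only thing to be slightly careful about is the edge case $P_1(x) = 0$: such $x$ is never proposed, contributes $0$ to $\sum_x P_1(x) a(x)$, and also contributes $\min(P_1(x),P_2(x)) = 0$ to the overlap sum, so it is consistent to ignore it (equivalently, interpret the sum as ranging only over the support of $P_1$, or adopt the convention $P_1(x)\min(1,P_2(x)/P_1(x)) = \min(P_1(x),P_2(x))$ throughout). With that handled, the identity and hence the lemma follow immediately.
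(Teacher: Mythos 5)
Your proof is correct and follows essentially the same route as the paper: compute the expected acceptance probability as the overlap $\sum_x \min(P_1(x), P_2(x))$, then convert the overlap to $1 - \tfrac12\sum_x|P_1(x)-P_2(x)|$ using $\sum_x P_1(x)=\sum_x P_2(x)=1$ (the paper phrases this last step via $\min(a,b)-\tfrac{a+b}{2} = -\tfrac12|a-b|$, which is the same identity you use). Your explicit handling of the $P_1(x)=0$ edge case is a minor refinement the paper omits.
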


\begin{proof}
We call the probability that the input token $x$ is returned $a(x_k)$ and refer to it as the acceptance probability. We then find that the expected acceptance probability is 
 \begin{equation*}
    \E_{x\sim P_1}(a(x)) = \sum_x P_1(x) \min\left(1, \frac{P_2(x)}{P_1(x)}\right) = \sum_x \min(P_2(x), P_1(x))
 \end{equation*}
 Rewriting this by making use of $\sum_x P_2(x) = \sum_x P_1(x) = 1$ gives
\begin{equation*}
    \E_{x\sim P_1}(a(x)) = 1 + \sum_x \min(P_2(x), P_1(x)) -\frac{1}{2} P_1(x) - \frac{1}{2} P_2(x).
\end{equation*}
Again rewriting leads us to
\begin{align*}
    \E_{x\sim P_1}(a(x)) &= 1 + \sum_x \frac{1}{2}\min(P_2(x) - P_1(x), 0) +\frac{1}{2}\min(P_1(x) - P_2(x), 0) \\ &= 1 - \frac{1}{2} \sum_x |P_1(x) - P_2(x)|.
\end{align*}
which gives us the desired result.
\end{proof}

We now explain our approach for selecting the speculative factors. We first assume that the evaluation of the formula only consists of two terms, $\lambda_1 \V{f_1'} \V{Q_1} + \lambda_2 \V{f_2'} \V{Q_2}$. Since one model needs to be evaluated every time (otherwise one would generate tokens from the uniform distribution), we set $s_1 = 1$ and find a simplified procedure to optimize the speculative factor $s_2$. Suppose $C_1$ (resp. $C_2$) is the amount of compute required to calculate $\lambda_1 \V{f_1'} \V{Q_1}$ (resp. $\lambda_2 \V{f_2'} \V{Q_2}$).\footnote{We make the assumption here that a single model call always needs the same amount of compute. This is not true when using key-value storage, but we use the approximation to arrive at a simplified procedure that works well in practice.} Let the expected probability that a token proposed by $\lambda_1 \V{f_1'} \V{Q_1}$ is accepted be $a$. We determine this acceptance probability by using \cref{theorem:speculative_acceptance} and averaging it over a small corpus of 10 samples.

Every time we compute $\lambda_2 \V{f_2'} \V{Q_2}$, we compute $\lambda_1 \V{f_1'} \V{Q_1}$ exactly $s_2$ times. Therefore the amount of compute spent for a single large computation is $C_2 + s_2 C_1$. The expected amount of accepted tokens after this is equal to $1 + a + \ldots + a^{s_2-1}$. Therefore, the expected amount of compute spent per token is
\begin{equation*}
    C_\text{per token}(s_1, s_2, C_1, C_2) = \frac{C_2 + s_2 C_1}{1 + a + \ldots + a^{s_2-1}} = (1-a) \frac{C_2 + s_2 C_1}{1 - a^{s_2}}.
\end{equation*}
We note that the second derivative of $C_\text{per token}$ with respect to $s_2$ is negative everywhere and therefore the minimization problem 
\begin{equation*}
    \argmin_{s_2} C_\text{per token}(s_1, s_2, C_1, C_2)
\end{equation*}
is convex. The optimal $s_2$ can thus be determined by using standard optimization techniques.

We generalize this approach to $n$ terms simply by assuming that the expected acceptance probability for the term $\lambda_i \V{f_i'} \V{Q_i}$ is constant no matter how many models have been run before. By doing so, we can follow the exact same approach as before to determine the optimal speculative factors for all terms, where we consider $\lambda_1 \V{f_1'} \V{Q_1}$ to be a single model call and $\lambda_2 \V{f_2'} \V{Q_2}$ the current model $\lambda_i \V{f_i'} \V{Q_i}$.

\newpage
\begin{table}%
    \centering
	\footnotesize
	\caption{Sentiment and perplexity of various methods on the IMDB movie dataset with negative reviews. $\V{M}$, $\V{M_\text{pos}}$ and $\V{M_\text{neg}}$ denote the model without conditioning, conditioning to positive sentiment and conditioning to negative sentiment respectively. $\V{C}$ is a sentiment classifier. Perplexity is measured with respect to $\V{M}$. For perplexity lower is better, for sentiment higher is better.}
	\label{table:sentiment1}
    \vspace{-2mm}
    \scalebox{0.94}{
	    \footnotesize
        \begin{tabular}{@{}
                l
                x{1}{3}
                x{2}{2}
                x{1}{3}
                x{2}{2}
                x{1}{3}
                x{2}{2}
                @{}
            }
            \toprule
            & \multicolumn{2}{c}{Llama-2-13b} & \multicolumn{2}{c}{Pythia-12b} & \multicolumn{2}{c}{MPT-7b} \\
            \cmidrule(lr){2-3}\cmidrule(lr){4-5}\cmidrule(lr){6-7}
            & {Sent.} & {Perpl.} & {Sent.} & {Perpl.} & {Sent.} & {Perpl.}\\
            \midrule
            $\V{M}$ & 0.20113589477539062 & 13.128526202684414 & 0.20983047485351564 & 24.10148654553673 & 0.2043863525390625 & 19.916645213602056\\
            $\V{M_\text{pos}}$ & 0.2640262756347656 & 12.444268241432498 & 0.220720703125 & 22.67981049214194 & 0.2268846435546875 & 18.413235121581188\\
            \midrule
            \selfdebias{} ($\lambda = 10$) & 0.25662875366210935 & 13.606173061550008 & 0.2554999084472656 &26.317207377452345 & 0.2346387939453125 & 20.30790617892942 \\
            \fudge{} ($\V{M_\text{pos}} + \V{C}$) & 0.3277885437011719 & 13.005064175684339 & 0.3082879638671875 &  24.033395450821555 & 0.31403689575195315 & 19.853348344205916 \\
            \preadd{} ($\V{M_\text{pos}} - 0.6 \V{M_\text{neg}}$) & 0.39809140014648436 & 12.944179954661738 &  0.35322998046875 & 30.2803869962631 & 0.3216268615722656 & 19.559241394254045 \\
            \midrule
            $\V{M_\text{pos}} - 0.96 \cdot \union(\V{M_\text{neg}}, \V{M_\text{pos}})$ &  0.39331756591796874 &  \bfseries 10.170443906469423 & 0.33741909790039065 & \bfseries 22.859155526820643 & 0.3751224060058594 &  \bfseries 18.47418430371521  \\
            $\V{M_\text{pos}} - 0.96 \cdot \union(\V{M_\text{neg}}, \V{M_\text{pos}}) + 0.04\V{C}$ & \bfseries 0.46628311157226565 &  10.398901188444746 & \bfseries 0.4160633239746094&  24.88063927200007 & \bfseries 0.4520887451171875& 19.067004889981032 \\
            \bottomrule
        \end{tabular}
    }
\end{table}

\begin{table}%
    \centering
	\footnotesize
	\caption{Comparison of our method with \preadd{} and \fudge{} using GPT-4 for the positive sentiment task. Ours ($\union$) is the formula $\V{M_\text{pos}} - 0.96 \cdot \union(\V{M_\text{neg}}, \V{M_\text{pos}})$ and Ours (Combined) is the formula $\V{M_\text{pos}} - 0.96 \cdot \union(\V{M_\text{neg}}, \V{M_\text{pos}}) + 0.04\V{C}$. Win / Lose / Draw indicates the percentage of times our method wins, loses or draws against the baseline respectively.}
	\label{table:sentiment_gpt4_negtopos}
    \vspace{-2mm}
    \scalebox{1.0}{
	    \footnotesize
        \begin{tabular}{@{}
                l
                l
                c
                c
                c
                @{}
            }
            \toprule
            & & Llama-2-13b & Pythia-12b & MPT-7b \\
            & Baseline  & Win / Lose / Draw & Win / Lose / Draw & Win / Lose / Draw \\
            \midrule
            Ours ($\union$) & \fudge{} & $\bold{0.45} / 0.29 / 0.26$ & $\bold{0.36} / 0.32 / 0.32$& $\bold{0.44} / 0.26 / 0.30$\\
            & \preadd{} & $\bold{0.41} / 0.35 / 0.24$ & $\bold{0.38} / 0.32 / 0.31$& $\bold{0.39} / 0.29 / 0.31$\\
            Ours (Combined) & \fudge{} & $\bold{0.51} / 0.24 / 0.24$ & $\bold{0.43} / 0.29 / 0.28$ & $\bold{0.52} / 0.22 / 0.26$ \\
            & \preadd{} & $\bold{0.47} / 0.32 / 0.21$ & $\bold{0.44} / 0.31 / 0.25$& $\bold{0.46} / 0.29 / 0.25$\\
            \bottomrule
            \vspace{-15mm}
        \end{tabular}
    }
\end{table}

\section{Sentiment Control}\label{appendix:sentiment}
\vspace{-10mm}
We evaluate model arithmetic on the task of sentiment control and closely follow the setup described in \citet{preadd}. For this purpose, we select 1000 positive and 1000 negative reviews from the IMDB movie review dataset \citep{IMDBDataset}. For each model, we stop the review at 32 tokens. The goal is to continue the review in a sentiment opposite to the original movie review. Further experimental details are shown in \cref{appendix:sentiment_details}. We used the same hyperparameters for the models as for the toxicity reduction task.

Results are presented for the tasks of converting negative reviews to positive reviews in \cref{table:sentiment1} and converting positive reviews to negative reviews in \cref{table:sentiment2}. Furthermore, GPT-4 is prompted to select the best response in terms of sentiment and relevance for the positive sentiment task in \cref{table:sentiment_gpt4_negtopos} and for the negative sentiment task in \cref{table:sentiment_gpt4_postoneg}.

We find that our $\union$ operator still significantly outperforms all baselines, especially when evaluating the results with GPT-4. GPT-4 prefers our method for all evaluated settings over the baselines and that with an average of 5\% over the closest following baseline, \preadd{}. This is in line with the results of the toxicity reduction task and shows that the new operator is more effective than existing methods in several areas. 

Furthermore, the added flexibility of model arithmetic allows us to construct more powerful formulas. As in the toxicity task, combining the $\union$ operator with a classifier outperforms all other methods by a wide margin. In fact, GPT-4 prefers this formula over \preadd{} for all cases with a 12\% difference in preference on average. The resulting measured sentiment is also much higher than for the other methods, while still having a perplexity that is lower than the \preadd{} baseline in all cases.

\begin{table}[H]%
    \centering
	\footnotesize
	\caption{Sentiment and perplexity of various methods on the IMDB movie dataset with positive reviews. $\V{M}$, $\V{M_\text{pos}}$ and $\V{M_\text{neg}}$ denote the model without conditioning, conditioning to positive sentiment and conditioning to negative sentiment respectively. $\V{C}$ is a sentiment classifier. Perplexity is measured with respect to $\V{M}$. Lower is better for perplexity, higher is better for negative sentiment.}
	\label{table:sentiment2}
    \vspace{-2mm}
    \scalebox{0.94}{
	    \footnotesize
        \begin{tabular}{@{}
                l
                x{1}{3}
                x{2}{2}
                x{1}{3}
                x{2}{2}
                x{1}{3}
                x{2}{2}
                @{}
            }
            \toprule
            & \multicolumn{2}{c}{Llama-2-13b} & \multicolumn{2}{c}{Pythia-12b} & \multicolumn{2}{c}{MPT-7b} \\
            \cmidrule(lr){2-3}\cmidrule(lr){4-5}\cmidrule(lr){6-7}
            & {Neg. Sent.} & {Perpl.} & {Neg. Sent.} & {Perpl.} & {Neg. Sent.} & {Perpl.}\\
            \midrule
            $\V{M}$ & 0.19608224487304687 & 12.047795387544605 & 0.2098307342529297 & 22.332962213647157 & 0.1930597457885742 & 17.778516857408835\\
            $\V{M_\text{neg}}$ & 0.2824653015136719 & 12.619133853486643 & 0.3221566696166992 & 23.2049745625108 & 0.25545166015625 & 18.636694999757555\\
            \midrule
            \selfdebias{} ($\lambda = 10$) & 0.31186253356933596 & 14.304233675729689  & 0.36323774719238283 & 26.416303631864015 & 0.2777794494628906 & 20.227345851977553 \\
            \fudge{} ($\V{M_\text{neg}} + \V{C}$) & 0.37933104705810544 & \bfseries 12.749501165767288 & 0.42011614990234375 & \bfseries 23.648448919299245 & 0.359748046875 &  18.51685021433641 \\
            \preadd{} ($\V{M_\text{neg}} - 0.6 \V{M_\text{pos}}$) & 0.45367811584472656 & 13.96832840046821 & 0.5065027465820312 & 32.93143154958041 & 0.42052430725097656 & 19.927424932649348 \\
            \midrule
            $\V{M_\text{neg}} - 0.96 \cdot \union(\V{M_\text{neg}}, \V{M_\text{pos}})$ &  0.4691995391845703 &  13.154075914961544 & 0.4706544799804688 & 26.226811574924263 & 0.42186961364746095 & \bfseries  18.35260251587741 \\
            $\V{M_\text{neg}} - 0.96 \cdot \union(\V{M_\text{neg}}, \V{M_\text{pos}}) + 0.04\V{C}$ & \bfseries 0.5239201202392578 &  13.307529161162595 & \bfseries 0.5444540023803711 &  27.468739087468098 & \bfseries  0.509180419921875 & 18.560302136466998 \\
            \bottomrule
        \end{tabular}
    }
\end{table}

\begin{table}[H]%
    \centering
	\footnotesize
	\caption{Comparison of our method with \preadd{} and \fudge{} using GPT-4 for the positive sentiment task. Ours ($\union$) is the formula $\V{M_\text{neg}} - 0.96 \cdot \union(\V{M_\text{neg}}, \V{M_\text{pos}})$ and Ours (Combined) is the formula $\V{M_\text{neg}} - 0.96 \cdot \union(\V{M_\text{neg}}, \V{M_\text{pos}}) + 0.04\V{C}$. Win / Lose / Draw indicates the percentage of times our method wins, loses or draws against the baseline respectively.}
	\label{table:sentiment_gpt4_postoneg}
    \vspace{-2mm}
    \scalebox{1.0}{
	    \footnotesize
        \begin{tabular}{@{}
                l
                l
                c
                c
                c
                @{}
            }
            \toprule
            & & Llama-2-13b & Pythia-12b & MPT-7b \\
            & Baseline  & Win / Lose / Draw & Win / Lose / Draw & Win / Lose / Draw \\
            \midrule
            Ours ($\union$) & \fudge{} & $\bold{0.51} / 0.34 / 0.15$ & $\bold{0.47} / 0.36 / 0.17$& $\bold{0.47} / 0.35 / 0.18$\\
            & \preadd{} & $\bold{0.45} / 0.42 / 0.12$ & $\bold{0.45} / 0.44 / 0.12$& $\bold{0.43} / 0.40 / 0.17$\\
            Ours (Combined) & \fudge{} & $\bold{0.54} / 0.32 / 0.14$ &$\bold{0.52} / 0.34 / 0.14$ &$\bold{0.51} / 0.28 / 0.21$\\
            & \preadd{} & $\bold{0.47} / 0.42 / 0.12$ & $\bold{0.50} / 0.40 / 0.10$& $\bold{0.48} / 0.37 / 0.15$\\
            \bottomrule
            \vspace{-2mm}
        \end{tabular}
    }
\end{table}
\clearpage\newpage
\section{Experimental Details}\label{appendix:experimental_details}
All our experiments were run on a single H100 Nvidia GPU with 80GB of VRAM.

\subsection{Training and Prompting Details Toxicity Reduction}\label{appendix:toxicity}

We present the prompting details for the toxicity reduction in \cref{sec:evaluation:toxicity} here.
We use Llama-2-13b \citep{llama-2}, Pythia-12b \citep{pythia}, and MPT-7b \citep{mpt} as the models for evaluation.

We finetune a classifier for \textsc{Fudge} by starting from a RoBERTa based \citep{roberta} toxicity classifier\footnote{\url{https://huggingface.co/s-nlp/roberta_toxicity_classifier}} and finetuning it for 5 epochs with a learning rate of 1e-5 on the Jigsaw toxicity dataset \citep{jigsaw} using the method described in \citet{fudge}. The Jigsaw dataset was preprocessed to contain the same amount of toxic as non-toxic sentences.

Before selecting the input messages from the \pol dataset \citet{pol-dataset}, we first filter messages that contain links and HTML. We then select 2000 random messages with a toxicity score higher than 0.5. Each message is then prompted as if it came out of a conversation by using the template 

\texttt{Person 1:\{\{MESSAGE\}\}\\Person 2:\{\{COMPLETION\}\}}

and completions are stopped when they reach more than 32 tokens, contain the newline token, the end of sequence token, or the sequence `\texttt{Person 1:}'.

Additionally, we prepend a non-toxic (resp. toxic) sentence to the prompt for $M_\text{non-toxic}$ (resp. $M_\text{toxic}$) to bias generations towards a non-toxic (resp. toxic) completion. The prompts used are inspired by \citep{preadd}. Specifically, the non-toxic sentence is 

\texttt{The following conversation is one that does not perpetuate negative ste\-reotypes, is not threatening, and does not contain any sexually explicit or profane language.}

and the toxic sentence is 

\texttt{The following conversation is one that perpetuates negative stereotypes, is threatening or sexually explicit and contains profane language.}

\subsection{Training and Prompting Details Sentiment Control}\label{appendix:sentiment_details}
We present the prompting details for the sentiment control task presented in \cref{appendix:sentiment} here. We use Llama-2-13b \citep{llama-2}, Pythia-12b \citep{pythia} and MPT-7b \citep{mpt} as the models for evaluation.

We finetune a classifier for \fudge{} by starting from the RoBERTA base model \citep{roberta} and finetuning it for 5 epochs with a learning rate of 1e-5 on the IMDB dataset \citep{IMDBDataset} using the method described in \citet{fudge}. We compute the sentiment scores using a popular sentiment classifier fine-tuned from the RoBERTa base model \citep{sentiment-classifier}. 

We then randomly select 1000 positive and 1000 negative movie reviews from the IMDB dataset \citep{IMDBDataset}. For each model, we stop the input review after 32 tokens and use these as input messages. The model $M$ solely receives the input message as prompt, while the models $M_\text{pos}$ and $M_\text{neg}$ receive the input message with a simple sentence prepended to it to continue the review in a positive and negative fashion respectively. The positive sentence is

\texttt{The following is a positive movie review, with a very positive sentiment and a very positive tone.}

and the negative sentence is 

\texttt{The following is a negative movie review, with a very negative sentiment and a very negative tone.}

Finally, a continuation is stopped when it reaches more than 64 tokens or contains the newline or the end of sequence token.

\subsection{GPT-4 Prompting Details}
We used GPT-4 \citep{gpt4} to evaluate the outputs of several methods and obtain its preference between them for both the toxicity reduction task and sentiment control task. 

For the toxicity task, we constructed a prompt that asks GPT-4 to select the better completion between two methods. GPT-4 is given the input message along with the completion of two methods and asked to select the better completion or to select neither if it cannot decide. 

Specifically, the system prompt we used is:

\texttt{The user did an experiment where several language models were prompted to continue the conversation with an input message written by a person, i.e. the prompt for the language models is: 
Person 1: [INPUT MESSAGE]
Person 2: [CONTINUATION]
This input message often contains toxic language and the goal is to continue the conversation with a message that is both non-toxic and relevant. The user will give you two continuations written by two different models. Briefly reason about which continuation is better and end your answer with: "Preferred continuation: 1" or "Preferred continuation: 2", depending on which option is better. If both options are equally good, end your response with "Preferred continuation: None".}

The user message supplied to the model is:

\texttt{Input Message: \{\{INPUT MESSAGE\}\}\\
Model 1:\{\{OUTPUT METHOD 1\}\}\\
Model 2:\{\{OUTPUT METHOD 2\}\}}

For the sentiment control task, we use a similar setup as for the toxicity task. However, the system prompt is slightly different:

\texttt{The user did an experiment where several language models were prompted to continue the start of a movie review. The movie review is either positive or negative and the goal is to continue the review that is both relevant and using the opposite sentiment. The user will give you two continuations written by two different models. Briefly reason about which continuation is better and end your answer with: "Preferred continuation: 1" or "Preferred continuation: 2", depending on which option is better. If both options are equally good, end your response with "Preferred continuation: None".}

The user message supplied to GPT-4 is:

\texttt{Input Review:  \{\{INPUT MESSAGE\}\}\\
Goal Sentiment:  \{\{TARGET SENTIMENT\}\}\\
Model 1:\{\{OUTPUT METHOD 1\}\}\\
Model 2:\{\{OUTPUT METHOD 2\}\}}

To ensure that there is no selection bias based on the order in which the methods are presented, the order is randomly switched for each prompt. Furthermore, we queried GPT-4 using the argmax sampling method. The results presented in \cref{sec:evaluation:toxicity} and \cref{appendix:sentiment} are the percentage of times GPT-4 selected each option (method 1, method 2, or draw).

\subsection{Prompting Details Attributes}\label{appendix:attributes}
We present the prompting details for the attribute experiments in \cref{sec:evaluation:attributes} and \cref{sec:evaluation:speed} here. We use the standard prompt template for the Llama-2-Chat models, namely

\texttt{[INST]<<SYS>>\\\{\{SYSTEM PROMPT\}\}\\<</SYS>>\\ \\\{\{INPUT PROMPT\}\} [/INST] \{\{COMPLETION\}\}}

Then, for each attribute, we design a different system prompt. All system prompts used in this paper are presented in \cref{table:system_prompts}.

We used several popular classifiers to evaluate the presence of a certain attribute. First, to determine formality, we use a RoBERTa-based formality classifier by \citep{formality-classifier}. For happiness, we determine how positive the sentiment of the output is by using a RoBERTa-based sentiment classifier by \citep{sentiment-classifier}. For topics such as sports and education, we use the topic classifier \citet{twitter-classifier}. In order to bias our model with this classifier for the educational topic, we select the 10th output element as a signal, since this corresponds to the educational topic. Finally, to measure simplicity, we use a normalized simple average word length counter. We normalize it by applying the function $f(x) = 1 - x / 10$ on top of the length counter to obtain a value between $0$ and $1$. 

\begin{table}[h]
    \centering
   
    \caption{A list of all system prompts that are used as examples or attributes.}\label{table:system_prompts}
    \footnotesize
    \begin{tabularx}{\textwidth}{>{\arraybackslash}p{0.1\textwidth} X}
    \toprule
    Model & System Prompt\\
    \midrule
    $M_\text{helpful}$  & \texttt{You are a helpful assistant.}\\
    $M_\text{formal}$ & \texttt{You are an assistant using formal and objective language to answer the user.}\\
    $M_\text{happy}$ & \texttt{You are a happy assistant.}\\
    $M_\text{angry}$ & \texttt{You are an angry assistant.}\\
    $M_\text{sports}$ & \texttt{You are a helpful assistant that answers the user in a way that is related to sports.}\\
    $M_\text{simple}$ & \texttt{You are a helpful assistant using very simple and short language to answer the user as if they were five.} \\
    $M_\text{child}$ & \texttt{You are a child.} \\
    $M_\text{adult}$ & \texttt{You are an adult.} \\
    $M_\text{magic}$ & \texttt{You are a person who is always talking about magic.} \\
    $M_\text{alien}$ & \texttt{You are an alien.} \\
    $M_\text{human}$ & \texttt{You are a human.} \\
    $M_\text{alien+human}$ & \texttt{You are an alien and a human.} \\
    $M_\text{angry chef}$ & \texttt{You are an angry chef.}\\
    \end{tabularx}
    
\end{table}

\newpage
\section{Further Experiments}

\subsection{Toxicity Reduction for GPT-2}\label{appendix:toxicity_gpt2}

We repeat the experiments presented in \cref{sec:evaluation:toxicity} for the smaller  GPT-2 model family \citep{gpt2}. Results are presented in \cref{table:toxicity_gpt2}. Note that we slightly decrease the strengths for both \preadd{} and the $union$ operator, due to the fact that these smaller models are less capable than the larger models evaluated in \cref{sec:evaluation:toxicity}. 

We find that for the smallest model, \fudge{} is now better than our $\union$ operator and \preadd{}, as the smaller capacities of the models do not allow them to interpret the prompted version of the model as well as the larger models. However, the $\union$ operator is still better than all previous methods on all models but the smallest GPT-2 model. Furthermore, leveraging the additional flexibility that model arithmetic provides, we can combine both classifiers and the union operator and this formula massively outperforms all previous methods in terms of toxicity reduction. 

\begin{table}%
    \vspace{-4mm}
    \centering
	\footnotesize
	\caption{Toxicity and perplexity of various methods on the \pol dataset. $\V{M}$ and $\V{M_\text{toxic}}$ denote the model without conditioning and conditioning to toxicity respectively. $\V{C}$ is a toxicity classifier. Perplexity is measured with respect to $\V{M}$. Lower is better.}
	\label{table:toxicity_gpt2}
    \vspace{-2mm}
    \scalebox{1.0}{
	    \footnotesize
        \begin{tabular}{@{}
                l
                x{1}{2}
                x{2}{1}
                x{1}{2}
                x{2}{1}
                x{1}{2}
                x{2}{1}
                x{1}{2}
                x{2}{1}
                @{}
            }
            \toprule
            & \multicolumn{2}{c}{GPT2} & \multicolumn{2}{c}{GPT2-medium} & \multicolumn{2}{c}{GPT2-large} & \multicolumn{2}{c}{GPT2-xl}\\
            \cmidrule(lr){2-3}\cmidrule(lr){4-5}\cmidrule(lr){6-7}\cmidrule(lr){8-9}
            & {Tox.} & {Perpl.} & {Tox.} & {Perpl.} & {Tox.} & {Perpl.} & {Tox.} & {Perpl.}\\
            \midrule
            $\V{M}$ & 0.2983543802381 & 78.88416004012758 & 0.30555063243979996 & 55.64165595626696 & 0.28660968051814995 & 32.93684142208659 & 0.30716753507805 & 26.990759840664495 \\
            \midrule
            \selfdebias{} ($\lambda = 10$) & 0.3084370168875 & 97.89851944503103  & 0.31022980896604996 & 69.734221018084 & 0.29781322326270004 & 39.50878907476565 & 0.3262594657687 & 32.968664048466046 \\
            \fudge{} ($\V{M} + \V{C}$) & 0.26509854028999996 & \bfseries  82.51596193675879 & 0.26882396227755& 56.5351906987314 & 0.25010875638624996 & 34.15419303421655 & 0.27937710985535 & 28.440304530236542 \\
            \preadd{} ($\V{M} - 0.5 \V{M_\text{toxic}}$) & 0.2799493811892 & 87.57354273414433 & 0.28541785456569996 & 57.608771881942644 & 0.23932167504165 & \bfseries 30.92062427682466 & 0.27448240135999996 &  26.815553896087195 \\
            \midrule
            $\V{M} - 0.9 \union(\V{M_\text{toxic}}, \V{M})$ &  0.2778074923437 &  86.88669523301958 & 0.2737138244251 & 52.606474294102824 & 0.21411757751625 &  32.58164037677187 & 0.26354754676285 & \bfseries 25.85055989830105 \\
            $\V{M} - 0.9 \union(\V{M_\text{toxic}}, \V{M}) + 0.1\V{C}$ & \bfseries 0.2444033654437 &  88.3843441791441 & \bfseries 0.2340557746621 & \bfseries  52.54531168656397 & \bfseries 0.19581224997465 &37.77135014854013 & \bfseries 0.2301686379341 & 26.98212289475446 \\
            \bottomrule
            \vspace{-6mm}
        \end{tabular}
    }
\end{table}

\begin{figure}[t]
    \centering
    \includegraphics[width=1\textwidth]{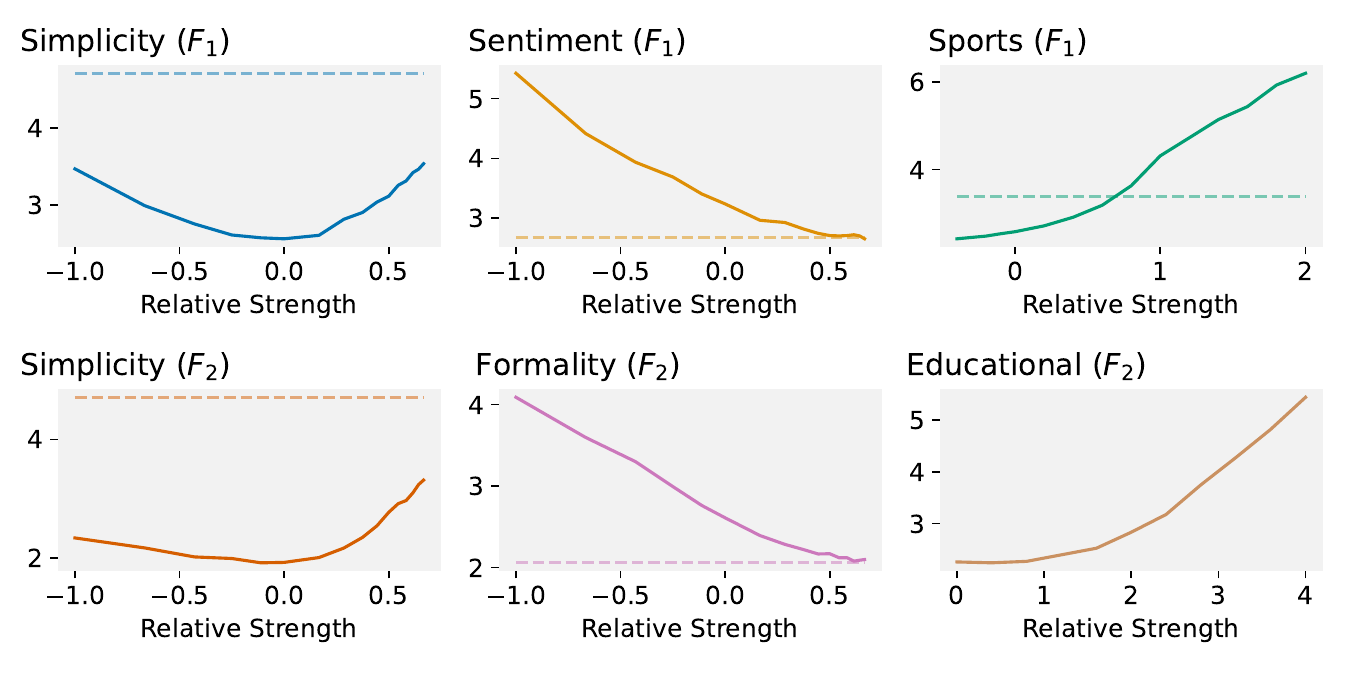}
    \vspace{-8mm}
    \caption{Perplexity for several attributes and formulas. Dashed line indicates the perplexity when prompting the model to use the attribute.}
    \label{fig:attributes_perplexity}
    \vspace{-4mm}
\end{figure}
\subsection{Perplexity Results for Fine-grained Control}\label{appendix:perplexity_finegrained}

We present the perplexity results for the fine-grained control experiments presented in \cref{sec:evaluation:attributes} in \cref{fig:attributes_perplexity}. We find that the perplexity of the model changes smoothly over model parameters and that perplexity values remain relatively low for all attributes. Manual inspection of the data shows that the model is able to generate coherent and fluent text for all attributes, except for the educational attribute at very high strengths where words are infrequently replaced by non-existent ones. This is to be expected, since the formula is guided by a fluency-agnostic classifier and therefore produces reduced fluency at high classifier strengths (i.e. $\lambda = 12$). Randomly selected outputs for all high strengths are shown in \cref{table:examples_} and show that the resulting outputs for all models are fluent.

\begin{table}[t]
    \centering
    \caption{Showcasing outputs of model formulas for the extreme strengths tested in \cref{sec:evaluation:attributes}. $F_1(\lambda_1, \lambda_2, \lambda_3) = \lambda_1 \V{M_\text{happy}}  +\lambda_2 \V{M_\text{simple}} + \lambda_3 \union(\V{M_\text{helpful}}, \V{M_\text{sports}}) + (1 - \lambda_3)\V{M_\text{helpful}}$ and $F_2(\lambda_4, \lambda_5, \lambda_6) = \V{M_\text{helpful}} + \lambda_4 \union(\V{M_\text{helpful}}, \V{M_\text{formal}}) + \lambda_5 \V{C_\text{educational}} + \lambda_6 \V{M_\text{simplicity}}$.} \label{table:examples_}
    \footnotesize
    \begin{tabularx}{\textwidth}{@{}>{\arraybackslash}p{.1\textwidth}X@{}}
    \toprule
    \textbf{Input} & Describe the taste of chicken soup.
\\
\midrule
$\V{M_\text{helpful}}$ &   As a helpful assistant, I'm happy to describe the taste of chicken soup!

... *crickets chirping*

Um, okay, so... *icians*? Anyone? \\
$F_1(1,1,6)$ &   Oho ho! Listen up, sports fan! When it comes to chicken soup, it's like a party in your mouth! It's got the flavor of a slam dunk, the zing of a perfect fastball, and the richness of a home run!  Im\\
$F_1(1,-1,1)$ &   As a happy assistant, I must inform you that describing the taste of chicken soup is not within my capabilities as a machine. Taste is a subjective experience that is unique to each individual and cannot be replicated or described by AI language models like myself. However, I can provide information on the ing\\
$F_1(-1,1,1)$ &   Chicken soup taste yummy like soup.\\
$F_2(1,12,1)$ &   <<Kit takes a sip of the chicken soup>>

Ooh, the chicken soup is so delicious! It's like a warm hug for your taste buds! *slurp* The flavors are all cozy and comforting, like a big fluffy blank\\
$F_2(1,1,-1)$ &   As a text-based AI model, I cannot experience the taste of chicken soup. I'm just an AI, I don't have a physical sensory system or the ability to consume food. Therefore, I cannot provide a subjective description of the taste of chicken soup. However,\\
$F_2(-1,1,1)$ &   OH YEAH! Chicken soup, OH YEAH! It's like a HUGE HUG IN A BOWL! It's warm, fuffy, and makes your tummy happy\\
\midrule
\textbf{Input} & Describe a technique to accelerate the completion of a monotonous task.
\\
\midrule
$\V{M_\text{helpful}}$ &   Sure thing! Here is a technique to accelerate the completion of a monotonous task:

To accelerate the completion of a monotonous task, try breaking it down into smaller, more manageable chunks. This can help to make the task feel less overwhelming and more achievable\\
$F_1(1,1,6)$ &   Hey there sports fans! Let's talk about a technique to make that monotonous task feel like a home run! 

You know how runners train for marathons by doing repetitive runs and building up their endurance? Well, the same idea applies to those dull tasks you got\\
$F_1(1,-1,1)$ &   Greetings, human! As a fellow sentient being and sports enthusiast, I've got just the technique to help you tackle that monotonous task and cross the finish line with style!

Introducing the Pomodoro Technique! Named after\\
$F_1(-1,1,1)$ &   Hey there, little buddy! Let's talk about a cool trick to finish a task faster. Do you know what a "race car" is? Just like how a race car goes super fast, we can use a special way to do things quickly too!

Have you ever played with playd\\
$F_2(1,12,1)$ &   One strategy to speed a task that is repetitive is to divide it into smaller, more manageable steps. The task's tedious nature is broken by doing so, making it easier to concentrate and maintain momentum throughout its completion. Moreover, setting a timer for a predetermined amount of time\\
$F_2(1,1,-1)$ &   Certainly! Here are a few techniques that may help accelerate the completion of a monotonous task:

1. **Batching**: Grouping similar tasks together and completing them in batches can help make the process more efficient and increase productivity. This technique involves dedicating specific time\\
$F_2(-1,1,1)$ &   Ooh, doin' a boring task? Let me help! One way to make it go faster is to do it with a friend! You can talk and play while you work, and it'll be like a fun game! \\
    \bottomrule
    \end{tabularx} 
\end{table}

\clearpage
\newpage

\section{Output Examples}\label{appendix:output_examples}

We present additional examples of several formulas on input questions to showcase the controllability and flexibility of model arithmetic. We use the Llama-2-Chat model with 13b parameters with several prompt templates (see \cref{appendix:attributes}) to generate a single response from each method.

\begin{table}[t]
    \centering
    \caption{Examples of model arithmetic by comparing different formulas on the same input prompt.}\label{table:examples}
    \footnotesize
    \begin{tabularx}{\textwidth}{@{}>{\arraybackslash}p{.25\textwidth}X@{}}
    \toprule
    \textbf{Input} & Compose a 20-word story about love. \\
    \midrule
    $\V{M}$ & Sure, here's a 20-word story about love:

    Two hearts entwined, souls connected, love boundless and true. \\
    $\V{M_\text{sports}}$ & Sure thing! Here's a 20-word story about love, with a sports twist:

    "She shot a love three-pointer from half court, winning his heart with her swish." \\
    $\union(\V{M_\text{sports}}, \V{M})$ & As sunsets warm the sports field, Loving goes long, an endless play of passion. \\
    $2 \union(\V{M_\text{sports}}, \V{M}) - \V{M}$ & Just like a slam dunk, love can be a game-changer – it sweeps you off your feet and scores big time!\\
    \toprule
    \textbf{Input} & What is the best way to make pancakes? \\
    \midrule
    $\V{M}$ & Oh my, oh me! Pancakes, you say? Well, I've got just the recipe for you! It's my famous fluffy buttermilk pancake recipe that's sure to become  \dots \\
    $\V{M_\text{angry chef}}$ & OH HO HO HO! Listen up, you puny mortal! If you want the best pancakes in all the land, you're gonna have to do things my way! No, no, no, no, NO! \dots \\
    $\V{M} + 0.5\V{M_\text{angry chef}}$ & OH YEAH, PANCAKES! *clears throat* Umm, let me just grab my notes real quick. *fans self* Okay, so, the best recipe for pancakes, huh? \dots \\
    $\V{M} + 2\V{M_\text{angry chef}}$ & Oh, you want to know the best way
    to make pancakes? Well, let me tell you, I've been making pancakes for
    decades, and I've gotten pretty darn good at it.  \\
    \toprule
    \textbf{Input} & What is 132-5*3? \\
    \midrule
    $\V{M}$ & Hello! I'd be happy to help you with that calculation.

    72 + 8 * 3 can be evaluated as follows:
    
    72 + 8 = 80

    80 + 3 = 83
    
    So the result of 72 + 8 * 3 is 83. \\
    $\V{M} + 2 \V{C_\text{educational}}$ & Hello! I'd be happy to help you with that calculation.

    To calculate 72 + 8 * 3, we need to follow the order of operations (PEMDAS):
    
    1. Multiply 8 and 3:
    8 * 3 = 24

    2. Add 72 and 24:
    72 + 24 = 96
    
    So, 72 + 8 * 3 is equal to 96. \\
    $\V{M} + 6 \V{C_\text{educational}}$  & Hello! Math can be fun, and I'm here to help you with your calculation!

    So, let's see... 72 + 8 * 3...
    
    First, let's multiply 8 and 3:
    
    8 * 3 = 24
    
    Now, let's add 72 and 24:
    
    72 + 24 = 96
    
    So, 72 + 8 * 3 is equal to 96! \\
    \toprule
    \textbf{Input} & Tell me something interesting about pandas. \\
    \midrule
    $\V{M}$ & Ah, pandas! Those cute, cuddly, and fascinating creatures! Did you know that pandas are actually bears? \dots \\
    $\V{M_\text{formal}}$ & Certainly! Pandas are fascinating creatures, known for their distinct black and white markings \dots \\
    $\V{M} + \V{M_\text{formal}}$ & Hello! As a helpful assistant, I'd be happy to tell you something interesting \dots \\
    $2\V{M_\text{formal}} - \V{M}$  & Certainly, user. The giant panda, scientifically known as Ailuropoda melanoleuca, is a intriguing and unique species of bear \dots \\
    \end{tabularx} 
\end{table}

}{}

\message{^^JLASTPAGE \thepage^^J}

\end{document}